\definecolor{codegreen}{rgb}{0,0.6,0}
\definecolor{codegray}{rgb}{0.5,0.5,0.5}
\definecolor{codepurple}{rgb}{0.58,0,0.82}
\lstdefinestyle{mystyle}{
    backgroundcolor=\color{white},   
    commentstyle=\color{codegreen},
    keywordstyle=\color{magenta},
    stringstyle=\color{codepurple},
    escapeinside={(*@}{@*)},
    basicstyle=\ttfamily\small,
    breakatwhitespace=false,         
    breaklines=false,                 
    captionpos=b,                    
    keepspaces=true,                 
    numbers=none,                    
    numbersep=3pt,                  
    showspaces=false,                
    showstringspaces=false,
    showtabs=false,                  
    tabsize=2
}
\DeclareMathOperator*{\argmax}{arg\,max}
\begin{document}

\title{Breiman meets Bellman: Non-Greedy Decision Trees with MDPs}

\author{Hector Kohler}
\email{hector.kohler@inria.fr}
\affiliation{%
\institution{UMR 9189 - CRIStAL}
  \institution{Universit\'e de Lille}
  \institution{CNRS, Inria, Centrale Lille}
  \city{Lille}
  \country{France}
}

\author{Riad Akrour}
\email{riad.akrour@inria.fr}
\affiliation{%
\institution{UMR 9189 - CRIStAL}
  \institution{Universit\'e de Lille}
  \institution{Inria, CNRS, Centrale Lille}
  \city{Lille}
  \country{France}
}

\author{Philippe Preux}
\email{philippe.preux@inria.fr}
\affiliation{%
\institution{UMR 9189 - CRIStAL}
  \institution{Universit\'e de Lille}
  \institution{CNRS, Inria, Centrale Lille}
  \city{Lille}
  \country{France}
}


\begin{abstract}
In supervised learning, decision trees are valued for their interpretability and performance. 
While greedy decision tree algorithms like CART remain widely used due to their computational efficiency, they often produce sub-optimal solutions with respect to a regularized training loss. 
Conversely, optimal decision tree methods can find better solutions but are computationally intensive and typically limited to shallow trees or binary features. We present Dynamic Programming Decision Trees (DPDT), a framework that bridges the gap between greedy and optimal approaches. 
DPDT relies on a Markov Decision Process formulation combined with heuristic split generation to construct near-optimal decision trees with significantly reduced computational complexity. 
Our approach dynamically limits the set of admissible splits at each node while directly optimizing the tree regularized training loss. Theoretical analysis demonstrates that DPDT can minimize regularized training losses at least as well as CART\@. 
Our empirical study shows on multiple datasets that DPDT achieves near-optimal loss with orders of magnitude fewer operations than existing optimal solvers. 
More importantly, extensive benchmarking suggests statistically significant improvements of DPDT over both CART and optimal decision trees in terms of generalization to unseen data. We demonstrate DPDT practicality through applications to boosting, where it consistently outperforms baselines. 
Our framework provides a promising direction for developing efficient, near-optimal decision tree algorithms that scale to practical applications.
\end{abstract}

\begin{CCSXML}
<ccs2012>
<concept>
<concept_id>10010147.10010257.10010293.10003660</concept_id>
<concept_desc>Computing methodologies~Classification and regression trees</concept_desc>
<concept_significance>500</concept_significance>
</concept>
<concept>
<concept_id>10010147.10010257.10010293.10010316</concept_id>
<concept_desc>Computing methodologies~Markov decision processes</concept_desc>
<concept_significance>300</concept_significance>
</concept>
</ccs2012>
\end{CCSXML}

\ccsdesc[500]{Computing methodologies~Classification and regression trees}
\ccsdesc[300]{Computing methodologies~Markov decision processes}

\keywords{decision trees, Markov decision processes}
\begin{teaserfigure}
  \includegraphics[width=\textwidth]{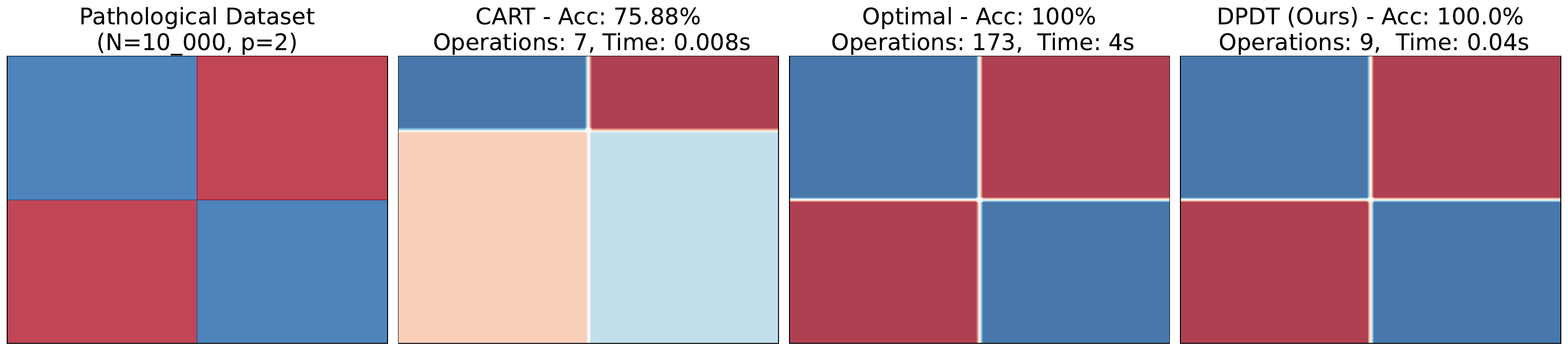}
  \caption{Pathological dataset and learned depth-2 trees with their scores, complexities, runtimes, and decision boundaries.}
  \Description{}
  \label{fig:patho}
\end{teaserfigure}

\maketitle

\section{Introduction}
Decision trees~\cite{ID3,c45,breiman1984classification} are at the core of various machine learning applications. 
Ensembles of decision trees such as tree boosting~\cite{stcohFriedman,FriedmanBoosting,xgb,10.5555/3327757.3327770} are the state-of-the-art for supervised learning on tabular data \citep{grinsztajn2022tree}.
Human users can make sense of decision trees predictions~\cite{rigourous,lipton,pmlr-v247-bressan24a} which allows for real-world applications when safety or trust is critical \citep{saux:hal-04192198}. 
More recently, decision trees have been used to model sequential decision policies with imitation learning~\cite{viper,kohler2024interpretable} or directly with reinforcement learning (RL) \citep{topin2021iterative,vos2024optimizinginterpretabledecisiontree,mdpdt,marton2024sympolsymbolictreebasedonpolicy}.

To motivate the design of new decision tree induction algorithms, Figure~\ref{fig:patho} exhibits a dataset for which existing greedy algorithms are suboptimal, and optimal algorithms are computationally expensive. 
The dataset is made up of $N=10^4$ samples in $p=2$ dimensions that can be perfectly labeled with a decision tree of depth 2. When running CART \citep{breiman1984classification}, greedily choosing the root node yields a suboptimal tree.
This is because greedy algorithms compute locally optimal splits in terms of information gain. In our example, the greedy splits always give two children datasets which themselves need depth 2 trees to be perfectly split.
On the other hand, to find the root node, an optimal algorithm such as \citep{quantbnb} iterates over all possible splits, that is, $N\times p=\numprint{20,000}$ operations to find one node of the solution tree.

In this work, we present a framework for designing non-greedy decision tree induction algorithms that optimize a regularized training loss nearly as well as optimal methods. This is achieved with orders of magnitude less operations, and hence dramatic computation savings.
We call this framework ``Dynamic Programming Decision Trees'' (DPDT). For every node, DPDT heuristically and dynamically limits the set of admissible splits to a few good candidates. Then, DPDT optimizes the regularized training loss with some depth constraints.
Theoretically, we show that DPDT minimizes the empirical risk at least as well as CART\@.
Empirically, we show that on all tested datasets, DPDT can reach 99\% of the optimal regularized train accuracy while using thousands times less operations than current optimal solvers. 
More importantly, we follow \citep{grinsztajn2022tree} methodology to benchmark DPDT against both CART and optimal trees on hard datasets. Following the same methodology, we compare boosted DPDT \citep{FREUND1997119} to boosted CART and to some deep learning methods and show clear superiority of DPDT.

\section{Related Work}

To learn decision trees, greedy approaches like CART \citep{breiman1984classification} iteratively partition the training dataset by taking splits optimizing a local objective such as the Gini impurity or the entropy. 
This makes CART suboptimal with respect to training losses \citep{Murthy}. 
But CART remains the default decision tree algorithm in many machine learning libraries such as \citep{scikit-learn,xgb,ke2017lightgbm,9533597} because it can scale to very deep trees and is very fast.
To avoid overfitting, greedy trees are learned with a maximal depth or pruned a posteriori \citep[chapter~3]{breiman1984classification}. 
In recent years, more complex optimal decision tree induction algorithms have shown consistent gains over CART in terms of generalization capabilities \citep{oct,verwer2017learning,murtree}.

Optimal decision tree approaches optimize a regularized training loss while using a minimal number of splits~\cite{oct,mfoct,binoct,quantbnb,murtree,blossom,pystreed,chaouki2024branchesfastdynamicprogramming}.
However, direct optimization is not a convenient approach, as finding the optimal tree is known to be NP-Hard \citep{npcomplete}. Despite the large number of algorithmic tricks to make optimal decision tree solvers efficient~\cite{murtree,quantbnb}, their complexity scales with the number of samples and the maximum depth constraint.
Furthermore, optimal decision tree induction algorithms are usually constrained to binary-features dataset while CART can deal with any type of feature. When optimal decision tree algorithms deal with continuous features, they can usually learn only shallow trees, e.g. Quant-BnB \citep{quantbnb} can only compute optimal trees up to depth 3.
\texttt{PySTreeD}, the latest optimal decision tree library~\cite{pystreed}, can compute decision trees with depths larger than three but uses heuristics to binarize a dataset with continuous features during a pre-processing step. 
Despite their limitations to binary features and their huge computational complexities, encouraging practical results for optimal trees have been obtained \cite{how-eff,lin2020generalized,costa2023recent,vanderlinden2024optimalgreedydecisiontrees}.
Among others, they show that optimal methods under the same depth constraint (up to depth four) find
trees with 1--2\% greater test accuracy than greedy methods.

In this work, we only consider the induction of nonparametric binary depth-constrained axis-aligned trees. By nonparametric trees, we mean that we only consider tree induction algorithms that optimize both features and threshold values in internal nodes of the tree. This is different from the line of work on Tree Alternating Optimization (TAO) algorithm~\cite{NEURIPS2018_185c29dc,9534446,10.1145/3412815.3416882} that only optimizes tree nodes threshold values for fixed nodes features similarly to optimizing neural network weights with gradient-based methods. 

There exist many other areas of decision tree research \citep{loh2014fifty} such as inducing non-axis parallel decision trees \citep{murthy1994system,10.1145/3637528.3671903}, splitting criteria of greedy trees \citep{vanderlinden2024optimalgreedydecisiontrees}, different optimization of parametric trees \citep{NIPS2015_1579779b,10.5555/3327757.3327770}, or pruning methods \citep{pruning1,pruning2}. 

Our work is not the first to formulate the decision tree induction problem as solving a Markov decision process (MDP)~\cite{Dulac_Arnold_2011,garlapati2015reinforcementlearningapproachonline,topin2021iterative,chaouki2024branchesfastdynamicprogramming}. Those works formulate tree induction as solving a partially observable MDP and use approximate algorithms such as Q-learning  \citep{garlapati2015reinforcementlearningapproachonline} or deep Q-learning \citep{topin2021iterative} to solve them in an online fashion one datum from the dataset at a time. In a nutshell, our work, DPDT that we present next, is different in that it builds a stochastic and fully observable MDP that can be explicitly solved with dynamic programming. This makes it possible to solve exactly the decision tree induction problem. 

\section{Decision Trees for Supervised Learning}


Let us briefly introduce some notations for the supervised classification problem considered in this paper.
We assume that we have access to a set of $N$ examples denoted $\mathcal{E} = {\{(x_i, y_i)\}}_{i=1}^N$. Each datum $x_i$ is described by a set of $p$ features. $y_i \in {\mathcal Y}$ is the label associated with $x_i$.

A decision tree is made of two types of nodes: split nodes that are traversed, and leaf nodes that finally assign a label.
To predict the label of a datum $x$, a decision tree $T$ sequentially applies a series of splits before assigning it a label $T(x) \in \mathcal Y$.
In this paper, we focus on binary decision trees with axis-aligned splits as in~\cite{breiman1984classification}, where each split compares the value of one feature with a threshold. 

Our goal is to learn a tree that generalizes well to unseen data. To avoid overfitting, we constrain the maximum depth $D$ of the tree, where $D$ is the maximum number of splits that can be applied to classify a data. We let $\mathcal{T}_{D}$ be the set of all binary decision trees of depth $\leq D$. Given a loss function ${\ell}: \mathcal{Y} \times \mathcal{Y} \rightarrow \mathbb{R}_+$, we induce trees with a regularized training loss defined by:

\begin{align}
    T^* &= \underset{T \in \mathcal{T}_D}{\operatorname{argmin}}\ {\mathcal L}_\alpha(T), \nonumber \\
    T^* &= \underset{T \in \mathcal{T}_D}{\operatorname{argmin}}\ \frac{1}{N}\overset{N}{\underset{i=1}{\sum}}{\ell}(y_i, T(x_i)) + \alpha C(T),
    \label{eq:suplearning}
\end{align}

where $C: \mathcal{T} \rightarrow \mathbb{R}$ is a complexity penalty that helps prevent or reduce overfitting such as the number of nodes~\cite{breiman1984classification,quantbnb}, or the expected number of splits to label a data\citep{how-eff}. The complexity penalty is weighted by $\alpha \in [0, 1]$. 
For supervised classification problems, we use the 0--1 loss: $\ell(y_i, T(x_i)) = \mathds{1}_{\{y_i\neq T(x_i)\}}$. Please note while we focus on supervised classification problems in this paper, our framework extends naturally to regression problems.

We now formulate the decision tree induction problem \ref{eq:suplearning} as finding the optimal policy in an MDP.

\section{Decision Tree Induction as an MDP}\label{sec:MDP}


Given a set of examples $\mathcal{E}$, the induction of a decision tree is made of a sequence of decisions: at each node, we must decide whether it is better to split (a subset of) $\mathcal{E}$, or 
to create a leaf node.

This sequential decision-making process corresponds to a Markov Decision Problem (MDP) \citep{puterman} $\mathcal{M}=\langle S, A, R_{\alpha}, P, D \rangle$.
A state is a pair made of a subset of examples $X\subseteq\mathcal E$ and a depth $d$. Then, the set of states is $S = \{ (X, d) \in P(\mathcal{E}) \times \{0, \ldots, D\} \}$ where $P(\mathcal{E})$ denotes the power set of $\mathcal{E}$. $d \in \{0,\ldots,D\}$ is the current depth in the tree.
An action $A$ consists in creating either a split node, or a leaf node (label assignment). We denote the set of candidate split nodes 
$ {\mathcal F} $. A split node in $\mathcal F$ is a pair made of one feature $i$ and a threshold value $x_{ij}\in \mathcal{E}$.
So, we can write $A = {\mathcal{F} \cup \{ 1, \ldots, K \}}$.
From state $s=(X,d)$ and a splitting action $a \in {\mathcal F}$, the transition function $P$ moves to the next state $s_l = (X_l, d+1)$ with probability $p_l = \frac{|X_l|}{|X|}$ where $X_l = \{(x_i, y_i) \in X: x_i \leq x_{ij}\}$, or to state $s_r = (X \setminus X_l, d+1)$ with probability $1-p_l$. For a class assignment action $a \in \{1,\ldots,K\}$, the chain reaches an absorbing terminal state with probability 1. 
The reward function $R_{\alpha}: S \times A \rightarrow \mathbb{R}$ returns $-\alpha$ for splitting actions and the proportion of misclassified examples of $X$ $-\frac{1}{|X|}\sum_{(x_i,y_i) \in X} \ell(y_i, a)$ for class assignment actions. $\alpha \in [0,1]$ controls the accuracy-complexity trade-off defined in the regularized training objective \ref{eq:suplearning}. 
The horizon $D$ limits tree depth to $D$ by forbidding class assignments after $D$ MDP transitions.

The solution to this MDP is a deterministic policy $\pi: S \rightarrow A$ that maximizes $J_{\alpha}(\pi) ={\mathbb{E}}\left[\sum_{t = 0}^D R_{\alpha}(s_t, \pi(s_t))\right]$, the expected sum of rewards where the expectation is taken over transitions $s_{t+1}\sim P(s_t, \pi(s_t))$ starting from initial state $s_0 = (\mathcal{E}, 0)$. 
Any such policy can be converted into a binary decision tree through a recursive extraction function $E(\pi, s)$ that returns, either a leaf node with class $\pi(s)$ if $\pi(s)$ is a class assignment, or a tree with root node containing split $\pi(s)$ and left/right sub-trees $E(\pi, s_l)$/$E(\pi, s_r)$ if $\pi(s)$ is a split. The final decision tree $T$ is obtained by calling $E(\pi, s_0)$ on the initial state $s_0$. 

\begin{proposition}[Objective Equivalence]\label{prop:equiv}
Let $\pi$ be a deterministic policy of the MDP and $\pi^*$ be an optimal deterministic policy. 
Then $J_\alpha(\pi) = -{\mathcal L}_\alpha(E(\pi, s_0))$ and $T^* = E(\pi^*, s_0)$ where $T^*$ is a tree that optimizes Eq.~\ref{eq:suplearning}.
\end{proposition}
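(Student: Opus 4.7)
The plan is to expand $J_\alpha(\pi)$ as a sum over trajectories from $s_0 = (\mathcal{E}, 0)$, observe that every such trajectory corresponds to a single root-to-leaf path of $T = E(\pi, s_0)$, and match the resulting expression with $-\mathcal{L}_\alpha(T)$ term by term, taking $C(T)$ to be the expected number of splits needed to classify a uniformly drawn example (the second complexity penalty mentioned after Eq.~\ref{eq:suplearning}).

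First, I would unfold the transition probabilities along a path: a split at state $(X, d)$ sends the chain to a child $(X', d+1)$ with probability $|X'|/|X|$, so the probability of reaching a leaf whose associated example subset is $X_l$ telescopes to $|X_l|/N$, where $N = |\mathcal{E}|$. This turns the expected return into a sum over the leaves $l$ of $T$ weighted by $|X_l|/N$.

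Next, I would split the contribution of each path into its ``split'' and ``leaf'' parts. The split actions along the path each contribute $-\alpha$, so their expected total over all paths is $-\alpha \sum_l (|X_l|/N)\, d_l$, where $d_l$ is the depth of leaf $l$; this is exactly $-\alpha C(T)$ for the complexity penalty identified above. The terminal reward at leaf $l$ is $-(1/|X_l|)\sum_{(x,y) \in X_l} \ell(y, \hat{y}_l)$, where $\hat{y}_l$ is the label predicted by that leaf; weighted by $|X_l|/N$ and summed over leaves, it recovers $-(1/N)\sum_{i=1}^N \ell(y_i, T(x_i))$, because the leaf subsets partition $\mathcal{E}$ and each example is labeled by the class assigned at its leaf. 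Adding the two pieces gives the first claim, $J_\alpha(\pi) = -\mathcal{L}_\alpha(E(\pi, s_0))$.

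For the second claim, I would argue that $E(\cdot, s_0)$ is surjective onto $\mathcal{T}_D$: any $T \in \mathcal{T}_D$ is realized by a policy that plays its prescribed splits and labels on reachable states, with arbitrary values on unreachable ones. Combined with the first claim, $\max_\pi J_\alpha(\pi) = -\min_{T \in \mathcal{T}_D} \mathcal{L}_\alpha(T)$, hence $T^* = E(\pi^*, s_0)$. The main obstacle is the telescoping identity tying the chain of transition probabilities to the partition of $\mathcal{E}$ induced by the leaves; once that identity is in hand, the rest reduces to rearrangement of finite sums.
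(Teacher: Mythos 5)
Your proof is correct, but it takes a different route from the paper's. The paper proves the identity $J_\alpha(\pi,\mathcal{E},D) = -\mathcal{L}_\alpha(E(\pi,s_0),\mathcal{E})$ by structural induction on the maximum depth $D$: the base case $D=0$ is a pure class assignment, and the inductive step applies the Bellman-style decomposition $J_\alpha(\pi,\mathcal{E},D+1) = -\alpha + p_l J_\alpha(\pi,X_l,D) + p_r J_\alpha(\pi,X_r,D)$ together with the recursive definition $C(T) = 1 + p_l C(T_l) + p_r C(T_r)$, so the bookkeeping mirrors the recursive extraction map $E$. You instead unroll the expectation directly: each trajectory under a deterministic policy is a root-to-leaf path, the transition probabilities telescope to $|X_l|/N$, and the return splits into $-\alpha\sum_l (|X_l|/N)\,d_l = -\alpha C(T)$ (the expected-number-of-splits penalty, which is exactly the closed form of the paper's recursion) plus the reweighted leaf losses, which reassemble into $-\frac{1}{N}\sum_i \ell(y_i,T(x_i))$ because the leaf subsets partition $\mathcal{E}$. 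The two arguments compute the same thing; the induction buys a short proof that stays aligned with the recursive definitions of $E$ and $C$ and avoids any explicit path/probability bookkeeping, while your leaf-sum version gives an explicit closed form and makes the trajectory-to-path correspondence transparent. You also go one step further than the paper on the second claim: the appendix only proves the return identity and treats $T^* = E(\pi^*,s_0)$ as an immediate consequence, whereas you explicitly supply the needed surjectivity of $E(\cdot,s_0)$ onto $\mathcal{T}_D$ (valid in the exhaustive-split setting the claim refers to), which is a worthwhile detail to make the optimality transfer airtight. The only technicality glossed over in both arguments is the degenerate case of a split producing an empty child (its leaf reward is ill-defined but carries probability zero), which does not affect correctness.
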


This proposition is key as it states that the return of any policy of the MDP defined above is equal to the regularized training accuracy of the tree extracted from this policy. A consequence of this proposition is that when all possible splits are considered, the optimal policy will generate the optimal tree in the sense defined by Eq.~\eqref{eq:suplearning}. The proof is given in the Appendix \ref{sec:proof-equiv}.

\section{Algorithm}\label{sec:dpdt}

We now present the Dynamic Programming Decision Tree (DPDT) induction algorithm. 
The algorithm consists of two essential steps. The first and most computationally expensive step constructs the MDP presented in Section~\ref{sec:MDP}. 
The second step solves it to obtain a policy that maximizes Eq.\ref{sec:MDP} and that is equivalent to a decision tree. Both steps are now detailed.

\subsection{Constructing the MDP}

An algorithm constructing the MDP of section~\ref{sec:MDP} essentially computes the set of all possible decision trees of maximum depth $D$ which decision nodes are in $\mathcal F$. 
The transition function of this specific MDP is a directed acyclic graph. Each node of this graph corresponds to a state for which one computes the transition and reward functions. 
Considering all possible splits in $\mathcal F$ does not scale. We thus introduce a state-dependent action space $A_s$, much smaller than $A$ and populated by a splits generating function. In Figure \ref{fig:schema-mdp}, we illustrate the MDP constructed for the classification of a toy dataset using some arbitrary splitting function.

\begin{figure}
      \centering
      \includegraphics[width=0.9\linewidth]{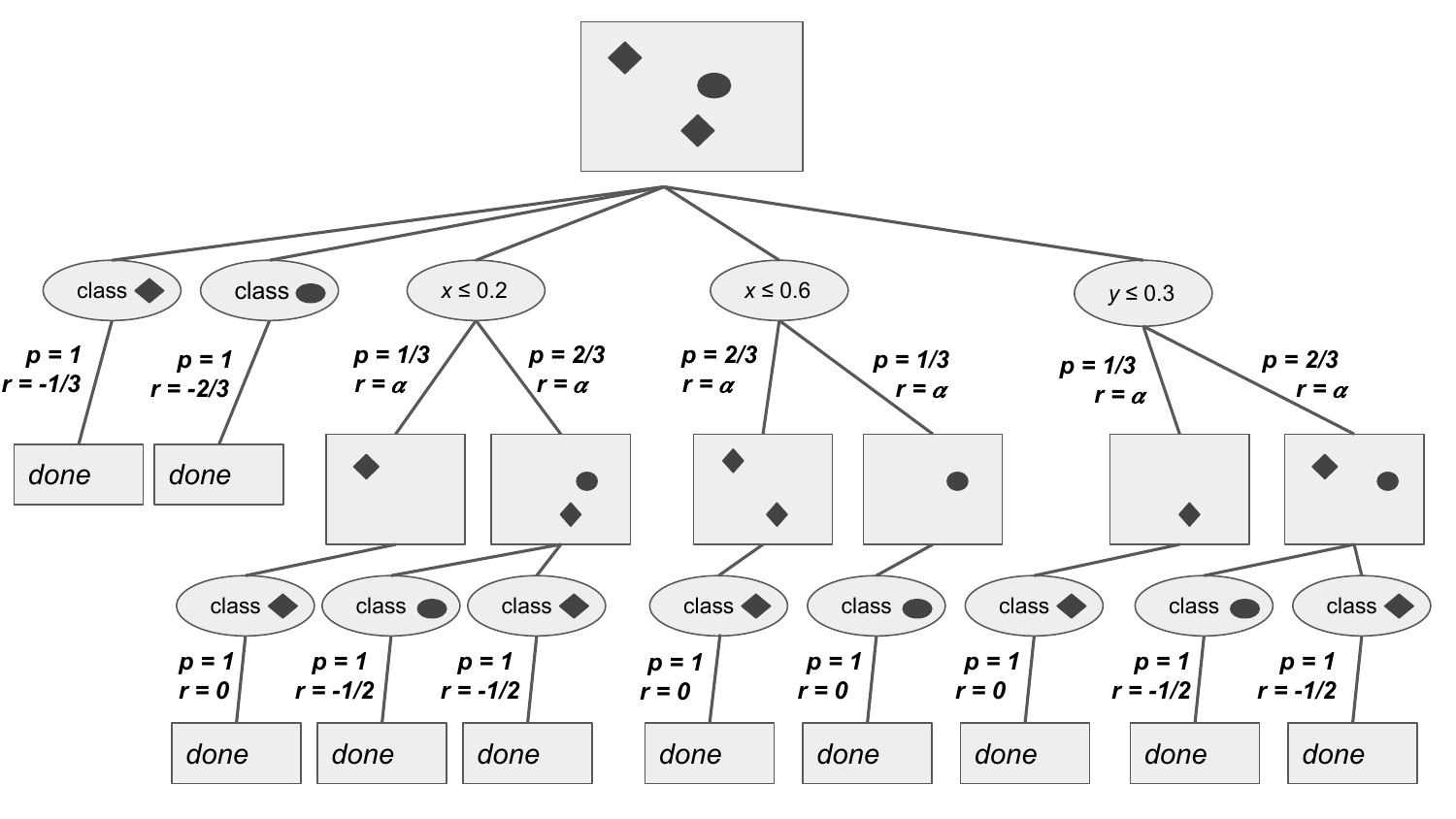}
      \caption{Schematics of the MDP to learn a decision tree of depth 2 to classify a toy dataset with three samples, two features (x,y), and two classes (oval, diamond) and using an arbitrary splits generating function.}
      \Description{This figure represent 
a schematic of the MDP described in previous sections. It looks like an acyclic graph where nodes are either MDP actions or subsets of training examples.}\label{fig:schema-mdp}
\end{figure}

\subsection{Heuristic splits generating functions}\label{sec:testgen}

A split generating function is any function $\phi$ that maps an MDP state, i.e., a subset of training examples, to a split node. It has the form $\phi: S \rightarrow P(\mathcal{F})$, where $P(\mathcal{F})$ is the power set of all possible split nodes in $\mathcal F$. 
For a state $s \in S$, the state-dependent action space is defined by $A_s = \phi(s) \cup  \{1,\ldots,K\}$. 

When the split generating function does not return all the possible candidate split nodes given a state, solving the MDP with state-dependent actions $A_s$ is not guaranteed to yield the minimizing tree of Eq.~\ref{eq:suplearning}, as the 
optimization is then performed on the subset of trees of depth smaller or equal to $D$, $\mathcal{T}_D$. 
We now define some interesting split generating functions and provide the time complexity of the associated decision tree algorithms. The time complexity is given in big-O of the number of candidate split nodes considered during computations. 

\paragraph{Exhaustive function.} When $\mathcal{F} \subseteq \phi(s), \forall s \in S$, the MDP contains all possible splits of a certain set of examples. In this case, \textit{the optimal MDP policy is the optimal decision tree of depth at most D},
and the number of states of the MDP would be $O({(2Np)}^D)$. Solving the MDP for $A_s = \phi(s)$ is equivalent to running one of the optimal tree induction algorithms~\cite{verwer2017learning,oct,pystreed,quantbnb,binoct,murtree,mfoct,blossom,lin2020generalized,chaouki2024branchesfastdynamicprogramming}

\paragraph{Top $B$ most informative splits.}\label{topk-heuristic}~\cite{topk} proposed to generate splits with a function that returns, for any state $s=(X,d)$, the $B$ most informative splits over $X$ with respect to some information gain measure such as the entropy or the Gini impurity. 
The number of states in the MDP would be $O({(2B)}^D)$. \textit{When $B=1$, the optimal policy of the MDP is the greedy tree.} 
In practice, we noticed that the returned set of splits lacked diversity and often consists of splits on the same feature with minor changes to the threshold value. 

\paragraph{Calls to CART}\label{cart-heuristic} Instead of returning the most informative split at each state $s=(X,d)$, we propose to find the most discriminative split, i.e.\@ the feature splits that best predicts the class of data in $X$.
We can do this by considering the split nodes of the greedy tree. In practice, we run CART on $s$ and use the returned nodes as $\phi(s)$. We control the number of MDP states by constraining CART trees with a maximum number of nodes $B$: $\phi(s) = nodes(\text{CART}(s, max\_nodes=B))$
The number of MDP states would be $O({(2B)}^D)$.\textit{When $B=1$, the MDP policy corresponds to the greedy tree.} The process of generating split nodes with calls to CART is illustrated in Figure \ref{fig:schema-dpdt}.

\begin{figure}
      \centering
      \includegraphics[trim={0 0cm 0 0},clip,width=0.5\textwidth]{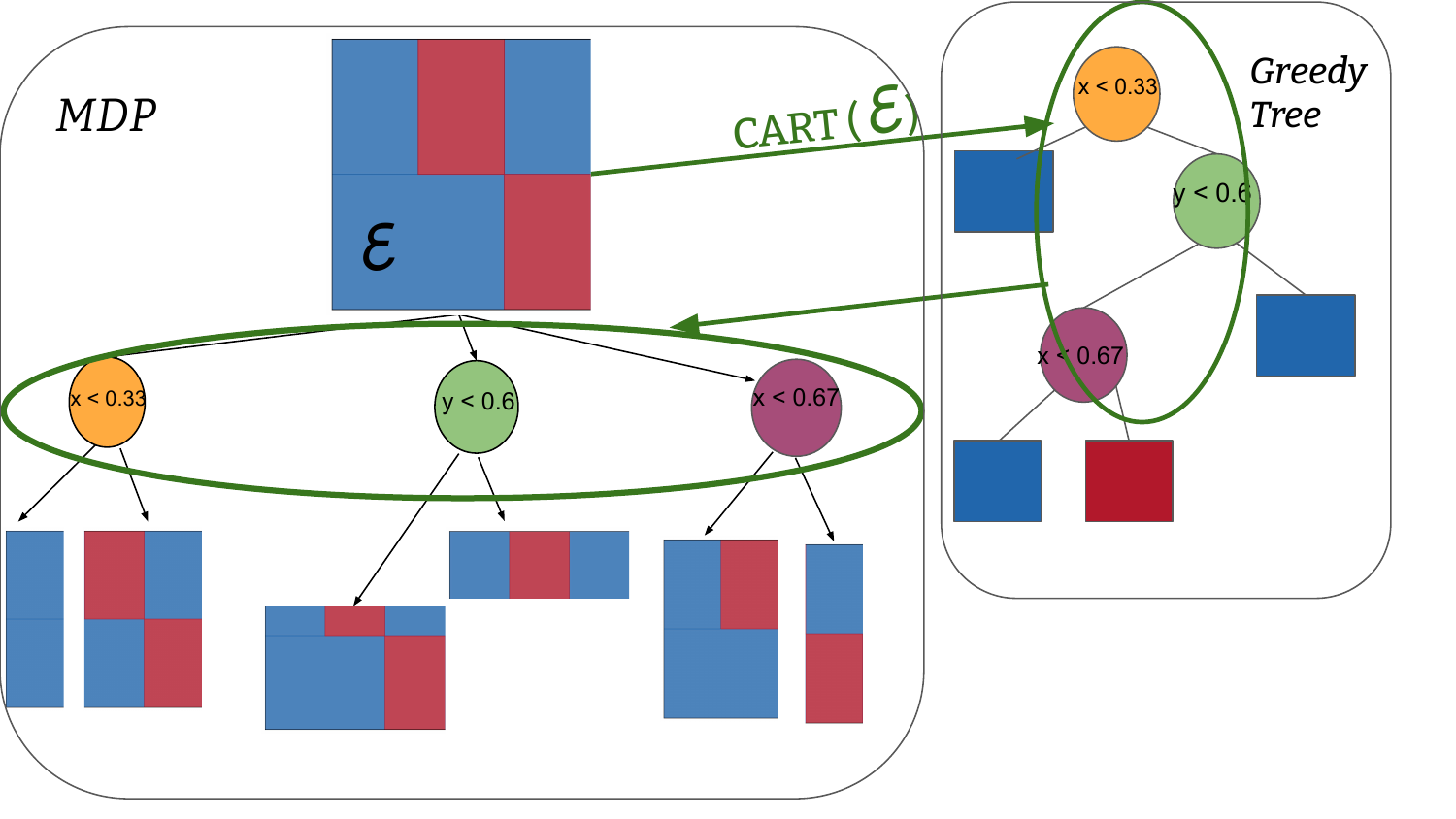}
      \Description{This is a schematic of split generation with calls to CART.}
      \caption{How CART is used in DPDT to generate candidate splits given the example data in the current state.}\label{fig:schema-dpdt}
\end{figure}

\subsection{Dynamic Programming to solve the MDP}
\RestyleAlgo{ruled}
\SetKwComment{Comment}{}{}
        \begin{algorithm}
            \KwData{$\text{Dataset }\mathcal{E}, \text{max depth }D, \text{split function }\phi(), $\\
            $\text{split function parameter } B,  \text{regularizing term }\alpha$}
            \KwResult{$\text{Tree } T$}
            $\mathcal{M} \gets build\_mdp(\mathcal{E}, D, \phi(), B)$\label{line:build_mdp} \\
            \Comment{{// Backward induction}}
            $Q^*(s,a) \gets R_{\alpha}^{\mathcal{M}}(s,a) + \sum_{s'} P^{\mathcal{M}}(s,a,s') \max_{a' \in A_{s'}^{\mathcal{M}}} Q^*(s',a') \forall s,a \in \mathcal{M}$\\
            \Comment{{// Get the optimal policy}}
            $\pi^*(s) = \argmax_{a \in A_s^{\mathcal{M}}} Q^*(s, a) \forall s \in \mathcal{M} $\\
            \Comment{{// Extracting tree from policy}}
            $T \gets E(\pi^*,s_0^{\mathcal{M}}) $
            \caption{DPDT}\label{alg:dpdt}
        \end{algorithm}
        
After constructing the MDP with a chosen splits generating function, we solve for the optimal policy using dynamic programming. Starting from terminal states and working backward to the initial state, we compute the optimal state-action values using Bellman's optimality equation~\cite{BELLMAN1958228}, and then deducing the optimal policy.

From now on, we write DPDT to denote Algorithm \ref{alg:dpdt} when the split function is a call to CART. We discuss key bottlenecks when implementing DPDT in subsequent sections. We now state theoretical results when using DPDT with the CART heuristic. 

\subsection{Performance Guarantees of DPDT}
We now show that: 1) DPDT minimizes the loss from Eq.~\ref{eq:suplearning} at least as well as greedy trees and 2) there exists problems for which DPDT has strictly lower loss than greedy trees. 
As we restrict the action space at a given state $s$ to a subset of all possible split nodes, DPDT is not guaranteed to find the tree minimizing Eq. \ref{eq:suplearning}. However, we are still guaranteed to find trees that are better or equivalent to those induced by CART:
\begin{theorem}[MDP solutions are not worse than the greedy tree]\label{prop:cart}
Let $\pi^*$ be an optimal deterministic policy of the MDP, where the action space at every state is restricted to the top $B$ most informative or discriminative splits. 
Let $T_0$ be the tree induced by CART and $\{T_1,\dots,T_M\}$ all the sub-trees of $T_0$, \footnote{These sub-trees are interesting to consider since they can be returned by common postprocessing operations following a call to CART, that prune some of the nodes from $T_0$. Please see \cite{pruning1} for a review of pruning methods for decision trees.} then for any $\alpha > 0$, 
\[
{\mathcal L}_\alpha(E(\pi^*, s_0)) \leq \min_{0\leq i\leq M}{\mathcal L}_\alpha(T_i)
\]
\end{theorem}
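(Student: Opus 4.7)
The plan is to build, for each tree $T_i$ in $\{T_0, T_1, \ldots, T_M\}$, a deterministic policy $\pi_i$ of the restricted MDP whose extracted tree equals $T_i$, and then to invoke Proposition~\ref{prop:equiv} together with optimality of $\pi^*$ to conclude. The central observation is that every split appearing in $T_0$ (and thus in every sub-tree $T_i$) is, by definition of CART, the most informative/discriminative split on the examples reaching that node. Under either restricted action space (top-$B$ most informative splits, or splits generated by a call to CART), this greedy split is always among the admissible actions at the corresponding MDP state whenever $B \geq 1$, so every split node of $T_0$ is a legal MDP action at the state it would be taken from.

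First, I would construct $\pi_0$ corresponding to CART itself. Proceeding recursively from $s_0 = (\mathcal E, 0)$, at each state $s=(X,d)$ reached, let $\pi_0(s)$ be the split CART makes at the matching node of $T_0$ on the subset $X$; by the previous paragraph this is an admissible action in $A_s$. At every state that CART turns into a leaf, let $\pi_0(s)$ be the corresponding class label, which is always in $\{1,\ldots,K\} \subseteq A_s$. By construction, $E(\pi_0, s_0) = T_0$. Next, for each sub-tree $T_i$ of $T_0$, I construct $\pi_i$ by copying $\pi_0$ except at those states whose corresponding node of $T_0$ has been pruned into a leaf in $T_i$: at such a state set $\pi_i(s)$ equal to the class label that $T_i$ assigns there. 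Since class assignments are always admissible, each $\pi_i$ is a valid policy of the restricted MDP with $E(\pi_i, s_0) = T_i$.

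Applying Proposition~\ref{prop:equiv} yields $J_\alpha(\pi_i) = -\mathcal{L}_\alpha(T_i)$ for every $i \in \{0,\ldots,M\}$, and also $J_\alpha(\pi^*) = -\mathcal{L}_\alpha(E(\pi^*, s_0))$. Optimality of $\pi^*$ in the restricted MDP gives $J_\alpha(\pi^*) \geq J_\alpha(\pi_i)$ for every $i$, i.e.\ $\mathcal{L}_\alpha(E(\pi^*, s_0)) \leq \mathcal{L}_\alpha(T_i)$. Taking the minimum over $i$ finishes the argument.

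The only delicate step is justifying that the greedy split of CART at an internal node of $T_0$ is really present in $A_s$ for the corresponding MDP state $s$. For the top-$B$ most informative heuristic this is immediate since CART's split is, by definition, the top-1 informative split. For the ``calls to CART'' heuristic it requires noting that running CART on $X$ with $max\_nodes \geq 1$ returns, in particular, the root split chosen by CART on $X$, which because CART is recursive and depends only on the current subset coincides with the split CART uses at the matching node of $T_0$. Once this is observed, the rest of the proof is a routine translation between trees and policies via Proposition~\ref{prop:equiv}.
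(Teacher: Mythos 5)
Your proof is correct, and it is in fact more complete than the proof printed in the paper: the paper's argument stops after recursively defining the expected-number-of-splits complexity $C(T)$ (the quantity that makes the loss $\mathcal{L}_\alpha$ match the MDP reward structure) and never carries out the remaining steps. What you supply---exhibiting, for $T_0$ and each pruned sub-tree $T_i$, an admissible policy $\pi_i$ of the restricted MDP with $E(\pi_i,s_0)=T_i$, and then combining $J_\alpha(\pi_i)=-\mathcal{L}_\alpha(T_i)$ with $J_\alpha(\pi^*)\geq J_\alpha(\pi_i)$---is exactly the missing half of the intended argument, and your handling of the one delicate point is right: the split CART makes at a node depends only on the subset of examples reaching it, it is the top-$1$ most informative split on that subset, hence it belongs to $A_s$ under either heuristic as soon as $B\geq 1$, while class assignments are always admissible, so pruning a node into a leaf stays representable. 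Two small points are worth making explicit: (i) the construction needs $T_0$ to respect the same depth bound $D$ as the MDP horizon, otherwise $\pi_0$ is not a policy of this MDP (and, pedantically, ties in the impurity criterion must be broken consistently between the run producing $T_0$ and the calls generating $\phi(s)$); and (ii) you only use the first half of Proposition~\ref{prop:equiv}, the identity $J_\alpha(\pi)=-\mathcal{L}_\alpha(E(\pi,s_0))$, whose inductive proof never uses the full action space and therefore applies verbatim to the restricted MDP---this is worth saying because the second half of that proposition (optimality of the extracted tree) does require the exhaustive split set and must not be invoked here. With those caveats stated, your argument is sound, and it actually establishes the inequality for every $\alpha\geq 0$, not only $\alpha>0$.
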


\begin{proof}
Let us first define $C(T)$, the expected number of splits performed by tree $T$ on dataset $\mathcal E$. 
Here $T$ is deduced from policy $\pi$, i.e. $T=E(\pi, s_0)$. $C(T)$ can be defined recursively as $C(T) = 0$ if $T$ is a leaf node, and $C(T) = 1 + p_l C(T_l) + p_r  C(T_r)$, where $T_l = E(\pi, s_l)$ and $T_r = E(\pi, s_r)$. 
In words, when the root of $T$ is a split node, the expected number of splits is one plus the expected number of splits of the left and right sub-trees of the root node.
\end{proof}

It is known that the greedy tree of depth 2 fails to perfectly classify the XOR problem as shown in Figure \ref{fig:patho} and in \citep{Murthy,how-eff}. We aim to show that DPDT is a cheap way to alleviate the weaknesses of greedy trees in this type of problems. The following theorem states that there exist classification problems such that DPDT optimizes the regularized training loss strictly better than greedy algorithms such as CART, ID3 or C4.5.
\begin{theorem}[DPDT can be strictly better than greedy]\label{thm:better_greedy}
There exists a dataset and a depth $D$ such that the DPDT tree $T^{DPDT}_D$ is strictly better than the greedy tree $T^{greedy}_{D}$ , i.e, $\mathcal{L}_{\alpha=0}(T^{greedy}_{D}) > \mathcal{L}_{\alpha=0}(T^{DPDT}_{D})$.
\end{theorem}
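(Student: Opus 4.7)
The plan is to prove Theorem~\ref{thm:better_greedy} by explicit construction: I will exhibit one concrete dataset and one depth $D$ for which the inequality is strict, rather than arguing abstractly.

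I would set $D = 2$ and adopt the pathological ``checkers'' dataset of Figure~\ref{fig:patho}, writing down a minimal discrete instance with explicit point coordinates and class labels. The two properties this dataset must satisfy are: (i) every axis-aligned split that maximizes information gain at the root produces two children each of which is itself an XOR-like subproblem that cannot be perfectly separated by a single further axis-aligned split; and (ii) there exists a non-greedy root split that partitions $\mathcal{E}$ into two halves, each perfectly classifiable by one additional axis-aligned split. The introduction already asserts that Figure~\ref{fig:patho}'s dataset meets both. Property (i) immediately yields $\mathcal{L}_{\alpha=0}(T^{greedy}_{2}) > 0$ by tracing CART: each of the four depth-2 leaves inherits a mixed class distribution, so a fixed positive fraction of points are misclassified regardless of tie-breaking.

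For the DPDT side, with $B \geq 2$ the candidate set $\phi(s_0) = \mathrm{nodes}(\mathrm{CART}(s_0, B))$ contains the greedy root $g_0$ together with at least one additional split $g_1$, namely CART's greedy choice on a child of $g_0$. I would engineer the dataset so that $g_1$ coincides with the non-greedy ``good'' root split of property (ii). Dynamic programming then searches over $A_{s_0} = \phi(s_0) \cup \{1,\dots,K\}$ and, because Proposition~\ref{prop:equiv} identifies the optimal MDP return with $-\mathcal{L}_\alpha$, the extracted tree attains $\mathcal{L}_{\alpha=0}(T^{DPDT}_{2}) = 0$, strictly below the greedy loss.

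The main obstacle is precisely this dataset engineering. Because DPDT's split-generating heuristic is itself a greedy CART call, the non-greedy root split I need at level~0 must appear as a \emph{greedy} split at some deeper state, not merely as a globally informative split. A plain $2{\times}2$ XOR will not work, since CART already achieves zero loss there at depth~2 after its arbitrary first split. I would therefore introduce asymmetry or additional clusters (as in Figure~\ref{fig:patho}) so that (a)~the information-gain-maximizing split at the root is strictly suboptimal with respect to $\mathcal{L}_{\alpha=0}$, and (b)~the information-gain-maximizing split at one of the root's children is exactly the ``good'' alternative needed at the root. Exhibiting an explicit point tableau that simultaneously satisfies both inequalities between information gains is the technical core of the proof; everything else is a direct backward induction on the tree of depth~2.
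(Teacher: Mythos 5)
Your overall strategy is the same as the paper's: take $D=2$ and an XOR-style dataset, show the greedy depth-2 tree has positive loss, note that with $B\ge 2$ the ``good'' root split appears in $\phi(s_0)$ because it is CART's greedy choice one level deeper, and conclude via Proposition~\ref{prop:equiv} that DPDT attains zero loss. However, your proposal stops exactly where the actual proof has to do its work: you defer the ``explicit point tableau'' and the verification of the two information-gain properties, which is precisely the content of the paper's Definition~\ref{def:checkerboard} and Lemmas~\ref{lem:first-split}--\ref{lem:dpdt-better}. As written, you have a plan plus an acknowledged unsolved engineering problem, not a proof.

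Moreover, the reason you believe extra engineering is needed is mistaken. You claim a plain $2\times 2$ XOR ``will not work, since CART already achieves zero loss there at depth 2 after its arbitrary first split.'' That is true only for the degenerate four-point XOR; for the dense XOR of Definition~\ref{def:checkerboard} (uniform samples on $[0,1]^2$ with label $(\lfloor 2x\rfloor+\lfloor 2y\rfloor)\bmod 2$), every axis-aligned root split yields two children that are each $50/50$, so all splits have identical (zero) information gain and the greedy root is an essentially arbitrary threshold $x_v$. Unless $x_v$ happens to be exactly $1/2$ (probability $1/|\mathcal{E}_{XOR}|$ under random tie-breaking), the child containing parts of both columns remains XOR-like and cannot be resolved by a single further split, so the greedy depth-2 tree has strictly positive loss; meanwhile CART's second split is $y=1/2$ (Lemma~\ref{lem:second-split}), which is exactly the candidate DPDT needs at the root. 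So the plain XOR suffices, and your proposed strengthening --- making the bad root split a \emph{strict} information-gain maximizer while the good split is the strict maximizer at a child --- is both unnecessary and harder than what the theorem requires; the paper instead exploits the tie at the root and a probabilistic (high-probability) argument about the greedy tie-break, which is enough for the existential statement.
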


The proof of this theorem is given in the next section.

\subsection{Proof of Improvement over CART}\label{proof-improve-opt}
In this section we construct a dataset for which the greedy tree of depth 2 fails to accurately classify data while DPDT with calls to CART as a splits generating function guarantees a strictly better accuracy. The dataset is the XOR pattern like in Figure \ref{fig:patho}. We will first show that greedy tree induction like CART chooses the first split at random and the second split in between the two columns or rows. Then we will quantify the misclassification of the depth-2 greedy tree on the XOR gate. Finally we will show that using the second greedy split as the root of a tree and then building the remaining nodes greedily, i.e. running DPDT with the CART heuristic, strictly decreases the misclassification. 
\begin{definition}[XOR dataset]\label{def:checkerboard}
     Let us defined the XOR dataset as $\mathcal{E}_{XOR} = \{(X_i, Y_i)\}_{i=1}^N$. $X_i = (x_i, y_i) \sim \mathcal{U}([0,1]^2)$ are i.i.d 2-features samples. $Y_i = f(X_i)$ are alternating classes with $f(x,y) = (\lfloor 2x \rfloor + \lfloor 2y \rfloor) \bmod 2$.
\end{definition}

\begin{lemma} The first greedy split is chosen at random on the XOR dataset from definition \ref{def:checkerboard}.
\end{lemma}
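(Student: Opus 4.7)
The plan is to exploit the symmetries of the XOR data-generating process to conclude that no axis-aligned split yields a better greedy score than any other in expectation, so CART's choice of the first split reduces to tie-breaking between statistically indistinguishable candidates.

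First, I would note that both the sample distribution $\mathcal{U}([0,1]^2)$ and the labeling function $f(x,y)=(\lfloor 2x\rfloor+\lfloor 2y\rfloor)\bmod 2$ are invariant under the feature swap $(x,y)\mapsto(y,x)$ and under the reflections $x\mapsto 1-x$ and $y\mapsto 1-y$. Consequently, the joint distribution of the list of candidate splits together with their empirical impurity scores is invariant under these transformations.

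Second, I would directly compute the class distribution of the two children induced by a split on feature $x$ at an arbitrary threshold $t\in(0,1)$. Partitioning the two half-planes $[0,t]\times[0,1]$ and $[t,1]\times[0,1]$ according to the four XOR cells and integrating, one finds that, in expectation over the sampling, each child has exactly equal mass of label $0$ and label $1$. By the $x\leftrightarrow y$ symmetry the same conclusion holds for splits on $y$. Since the parent itself has a 50/50 class distribution, any impurity-based greedy criterion (Gini, entropy, misclassification) has expected information gain equal to zero for every candidate split, regardless of feature and threshold.

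Combining these observations, I would conclude that all candidate splits are equally good in expectation and that, by the symmetry of the first step, the distribution of the $\argmax$ of the empirical gain is invariant under feature swap and under reflection of the threshold, which is the precise content of the statement that the first greedy split is chosen at random on XOR. The main subtlety, and the hardest part to phrase cleanly, is that for any finite sample the realized empirical gains are almost surely distinct, so a unique maximizer exists; the lemma must therefore be read as a statement about the invariance of its distribution rather than about a uniform measure over tied candidates. I would address this by formulating the conclusion in terms of the symmetries preserved by CART's selection rule, which yields the claimed randomness in an implementation-agnostic way.
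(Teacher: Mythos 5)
Your argument is essentially the paper's: the paper likewise shows that any candidate split leaves both children (each a rectangle spanning the full range of the other feature) with a balanced $50/50$ class distribution, so every split has the same (zero) information gain and the greedy choice degenerates to tie-breaking, with the $x \leftrightarrow y$ symmetry covering horizontal splits. Your closing caveat --- that for a finite sample the empirical gains are almost surely not exactly tied, so ``chosen at random'' should be read as invariance of the selected split's distribution --- is a fair refinement of a point the paper's population-level proof leaves implicit, and which its later use of uniform randomness over the $2|\mathcal{E}_{XOR}|$ candidate splits in Lemma~\ref{lem:dpdt-better} quietly relies on.
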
\label{lem:first-split}
\begin{proof}
Let us consider an arbitrary split $x = x_v$ parallel to the y-axis. The results apply to splits parallel to the x-axis because the XOR pattern is the same when rotated 90 degrees. The split $x_v$ partitions the dataset into two regions $R_{left}$ and $R_{right}$. Since the dataset has two columns and two rows, any rectangular area that spans the whole height $[0,1)$ has the same proportion of class 0 samples and class 1 samples from definition \ref{def:checkerboard}. So in both $R_{left}$ and $R_{right}$ the probabilities of observing class 0 or class 1 at random are $\frac{1}{2}$. Since the class distributions in left and right regions are independent of the split location, all splits have the same objective value when the objective is a measure of information gain like the entropy or the Gini impurity. Hence, the first split in a greedily induced tree is chosen at random.
\end{proof}

\begin{lemma}\label{lem:second-split}
    When the first split is greedy on the XOR dataset from definition \ref{def:checkerboard}, the second greedy splits are chosen perpendicular to the first split at $y=\frac{1}{2}$
\end{lemma}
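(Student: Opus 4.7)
The plan is to fix an arbitrary first greedy split---without loss of generality a vertical one at $x = x_v$ for some $x_v \in (0,1)$, since the XOR pattern is invariant under a $90^\circ$ rotation---and to study the best second split inside the left child $R_{\text{left}} = [0, x_v] \times [0, 1]$. The analysis of the right child $R_{\text{right}}$ is identical by symmetry, which accounts for the plural ``splits'' in the statement.

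First I would rule out a second split parallel to the first. Any vertical cut $x = x_v'$ with $x_v' < x_v$ partitions $R_{\text{left}}$ into two rectangles that still span the full height $[0,1]$; by the row-balance argument used in the proof of Lemma~\ref{lem:first-split}, each such rectangle contains classes $0$ and $1$ in equal proportion, so any information-gain criterion returns zero.

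Next I would consider a perpendicular (horizontal) split $y = y_v$ and show it is minimized at $y_v = 1/2$, by direct area counting. If $x_v \leq 1/2$, then $R_{\text{left}}$ lies entirely inside the left XOR column---class $0$ on the bottom half, class $1$ on the top---so the split $y_v = 1/2$ produces two pure leaves of impurity zero, which is trivially optimal. If $x_v > 1/2$, counting areas shows that at $y_v = 1/2$ the two equally-weighted children have class-$0$ proportions $\tfrac{1}{2x_v}$ and $1 - \tfrac{1}{2x_v}$, so the weighted impurity equals $G\!\left(\tfrac{1}{2x_v}\right)$, which is strictly below the parent impurity $G(1/2) = 1/2$.

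To promote $y_v = 1/2$ to a global optimum in the remaining case $x_v > 1/2$, I would use the fact that $R_{\text{left}}$ is invariant under the involution $y \mapsto 1 - y$ combined with a class swap, which forces the weighted impurity, viewed as a function of $y_v$, to be symmetric about $1/2$. A short calculation shows that on $(0, 1/2]$ the class-$0$ proportion of the lower child is constant equal to $\tfrac{1}{2x_v}$, while that of the upper child moves monotonically from $1/2$ toward $1 - \tfrac{1}{2x_v}$; concavity of Gini (or entropy) in the class proportion, combined with the symmetry, pins down $y_v = 1/2$ as the unique minimizer. I expect this monotonicity-and-concavity step to be the main obstacle---it is routine but slightly tedious calculus on a rational function of $y_v$---and in a fully detailed write-up I would record only the decisive algebraic identities rather than grind through the full derivative computation.
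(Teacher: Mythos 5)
Your proposal is correct, and on the case the paper actually treats it uses the same area-counting idea; but it is genuinely more complete than the paper's own proof. The paper assumes WLOG $x_v \le \tfrac12$ and only analyzes the narrow child $R_{left}$, which lies inside a single XOR column, so the split $y=\tfrac12$ yields two pure children and trivially maximizes the gain; the child that straddles the boundary $x=\tfrac12$ (your case $x_v > \tfrac12$, equivalently $R_{right}$ when $x_v<\tfrac12$) is never discussed, even though the lemma speaks of the second greedy \emph{splits} in the plural. You cover exactly that missing case: vertical cuts give zero gain by the row-balance argument, and among horizontal cuts $y=\tfrac12$ is the strict optimum. Your only soft spot is the final ``monotonicity-and-concavity'' step, which you can replace by a two-line comparison that needs no calculus: with $q=\tfrac{1}{2x_v}$, the lower child's class-$0$ proportion is constantly $q$ for $y_v\in(0,\tfrac12]$ and the upper child's proportion is $p(y_v)=\tfrac{x_v-y_v}{2x_v(1-y_v)}\in[1-q,\tfrac12]$, the split-weighted mixture being $\tfrac12$; hence
\begin{equation*}
W(y_v)-W(\tfrac12)\;=\;\bigl(1-y_v\bigr)\bigl(G(p(y_v))-G(q)\bigr)\;\ge\;0,
\end{equation*}
because $G$ is symmetric about $\tfrac12$ and nondecreasing on $[0,\tfrac12]$ (true for Gini and entropy), with equality only at $y_v=\tfrac12$; the reflection $y\mapsto 1-y$ with classes swapped handles $y_v>\tfrac12$, as you note. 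Like the paper, you argue at the population (area) level rather than with the empirical sample proportions, so your proof is at the same level of rigor as the original while closing the case its WLOG silently skips.
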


\begin{proof}
Assume without loss of generality due to symmetries, that the first greedy split is vertical, at \(x=x_v\), with $x_v <= \frac{1}{2}$. This split partitions the unit square into
$R_{left} = [0,x_v)\times[0,1)$ and $R_{right} = [x_v,1)\times[0,1)$. The split $y=\frac{1}{2}$ further partitions $R_{left}$ into $R_{left-down}$ and $R_{left-up}$ with same areas $x_v \times y = \frac{x_v}{2}$. Due to the XOR pattern, there are only samples of class 0 in $R_{left-down}$ and only samples of class 1 in $R_{left-up}$. Hence the the split $y = \frac{1}{2}$ maximizes the information gain in $R_{left}$, hence the second greedy split given an arbitrary first split $x=x_v$ is necessarily $y=\frac{1}{2}$.
\end{proof}
\begin{definition}[Forced- Tree]\label{def:grid-tree}
Let us define the forced-tree as a greedy tree that is forced to make its first split at $y=\frac{1}{2}$.
\end{definition}

\begin{lemma}\label{lem:dpdt-better}
The forced-tree of depth 2 has a 0 loss on the XOR dataset from definition \ref{def:checkerboard} while, with probability $1-\frac{1}{|\mathcal{E}_{XOR}|}$, the greedy tree of depth 2 has strictly positive loss. 
\end{lemma}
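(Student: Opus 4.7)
The plan is to handle the two claims separately. For the forced-tree, I show that its leaves recover exactly the four XOR cells. For the greedy tree, I combine Lemmas \ref{lem:first-split} and \ref{lem:second-split} to argue that, unless the random first split happens to fall in one privileged interval, the resulting tree misclassifies every sample in a non-empty strip.

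For the forced-tree, Definition \ref{def:grid-tree} fixes the root split at $y=1/2$, partitioning the unit square into $R_d=[0,1)\times[0,1/2)$ and $R_u=[0,1)\times[1/2,1)$. Each strip contains a one-row XOR pattern, so by Lemma \ref{lem:second-split} applied after swapping the roles of $x$ and $y$ (valid because the pattern is invariant under the swap), the greedy second split in each strip is $x=1/2$. The four resulting leaves coincide with the four XOR cells, each of which is monochromatic by Definition \ref{def:checkerboard}; hence every sample is labeled correctly and the loss is zero.

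For the greedy tree, assume without loss of generality via Lemma \ref{lem:first-split} that the random first split is at $x=x_v$ with $x_v<1/2$. Lemma \ref{lem:second-split} then forces both second splits to be $y=1/2$, giving four leaves. The two leaves with $x<x_v$ each cover exactly one XOR cell and are pure. Each of the two leaves with $x\geq x_v$ is a union of two XOR cells: the cell of width $1/2$ strictly dominates in area (hence almost surely in sample count) the cell of width $1/2-x_v$, so the majority vote labels the leaf with the class of the wider cell and misclassifies every sample in the narrower one. The misclassified set is therefore exactly $[x_v,1/2)\times[0,1)$, and the loss is strictly positive whenever this strip contains at least one sample.

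Finally, I bound the probability that the strip is empty. CART's candidate thresholds on feature $x$ lie strictly between consecutive sample values, so the strip is empty only if $x_v$ lies in the unique interval whose endpoints straddle $1/2$; the same holds for feature $y$ by symmetry. By Lemma \ref{lem:first-split} the first split is drawn uniformly among all candidate splits, so the probability of obtaining a zero-loss tree equals the ratio of these two favorable candidates to the total number of candidates, which matches $1/|\mathcal{E}_{XOR}|$ under the natural convention that there are $|\mathcal{E}_{XOR}|$ threshold candidates per feature. The main obstacle will be pinning down this exact probability rather than a crude upper bound: it requires fixing the precise convention used for enumerating splits (sample values vs.\ midpoints, handling of boundary splits, tie-breaking when $1/2$ coincides with a sample). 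Once the convention is fixed, the counting is elementary and the rest of the argument is a direct geometric case analysis.
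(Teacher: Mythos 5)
Your proposal is correct and follows essentially the same route as the paper's proof: the forced tree with root split $y=\tfrac{1}{2}$ classifies the XOR data perfectly, while the randomly chosen first greedy split leads to misclassification unless it is one of only two favorable candidates out of the $2|\mathcal{E}_{XOR}|$ possible splits, giving the stated probability $1-\frac{1}{|\mathcal{E}_{XOR}|}$. If anything, your counting is slightly more careful than the paper's, which loosely names the favorable splits ``$x=\tfrac{1}{2}$ and $y=\tfrac{1}{2}$'' rather than, as you do, the one candidate threshold per feature adjacent to $\tfrac{1}{2}$ (a set of measure-zero splits would otherwise almost surely not exist among sample-based candidates).
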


\begin{proof}
This is trivial from the definition of the forced tree since if we start with the split $y=\frac{1}{2}$, then clearly CART will correctly split the remaining data. If instead the first split is some  $x_v \neq \frac{1}{2}$ then CART is bound to make an error with only one extra split allowed. Since the first split is chosen at random, from Lemma \ref{lem:first-split}, there are only two splits ($x=\frac{1}{2}$ and $y=\frac{1}{2}$) out of $2 |\mathcal{E}_{XOR}|$ that do not lead to sub-optimality.   
\end{proof}
We can now formally prove theorem \ref{thm:better_greedy}.
\begin{proof}
    By definition of DPDT, all instances of DPDT with the CART nodes parameter $B\geq2$ include the forced-tree from definition \ref{def:grid-tree} in their solution set when applied to the XOR dataset (definition \ref{def:checkerboard}). 
    We know from lemma \ref{lem:dpdt-better} that with high probability, the forced-tree of depth 2 is strictly more accurate than the greedy tree of depth 2 on the XOR dataset. Because we know by proposition \ref{prop:equiv} that DPDT returns the tree with maximal accuracy from its solution set, we can say that DPDT depth-2 trees are strictly better than depth-2 greedy trees returned by e.g. CART on the XOR dataset. 
\end{proof}

\subsection{Practical Implementation}

The key bottlenecks lie in the MDP construction step of DPDT (Section \ref{sec:MDP}). In nature, all decision tree induction algorithms have time complexity exponential in the number of training subsets per tree depth $D$: $O((2B)^D)$, e.g., CART has $O(2^D)$ time complextiy. We already saw that DPDT saves time by not considering all possible tree splits but only $B$ of them. Using state-dependent split generation also allows to generate more or less candidates at different depths of the tree. Indeed, the MDP state $s = (X,d)$ contains the current depth during the MDP construction process. This means that one can control DPDT's time complexity by giving multiple values of maximum nodes: given $(B_1, B_2, ..., B_D)$, the splits generating function in Algorithm~\ref{alg:dpdt} becomes $\phi(s_i) = \phi(X_i, d=1) = nodes(\text{CART}(s, max\_nodes=B_1))$ and  $\phi(s_j) = \phi(X_j, d=2) = nodes(\text{CART}(s, max\_nodes=B_2))$.

Similarly, the space complexity of DPDT is exponential in the space required to store training examples $\mathcal E$. Indeed, the MDP states that DPDT builds in Algorithm~\ref{alg:dpdt} are training samples $X\subseteq \mathcal E$. Hence, the total space required to run DPDT is $O({Np}(2B)^{D})$ where $Np$ is the size of $\mathcal{E}$. In practice, one should implement DPDT in a depth first search manner to obtain a space complexity linear in the size of training set: $O(DNp)$. In practice DPDT builds the MDP from Section~\ref{sec:MDP} by starting from the root and recursively splitting the training set while backpropagating the $Q$-values. This is possible because the MDP we solve has a (not necessarily binary) tree structure (see Figure~\ref{fig:schema-mdp}) and because the $Q$-values of a state only depend on future states. 

We implemented DPDT\footnote{\url{https://github.com/KohlerHECTOR/DPDTreeEstimator}} following scikit-learn API~\cite{sklearn_api} with depth-first search and state-depth-dependent splits generating. 

\section{Empirical Evaluation}

In this section, we empirically demonstrate strong properties of DPDT trees. 
The first part of our experiments focuses on the quality of solutions obtained by DPDT for objective Eq.\ref{eq:suplearning} compared to greedy and optimal trees. We know by theorems \ref{prop:cart} and \ref{thm:better_greedy} that DPDT trees should find better solutions than greedy algorithms for certain problems; but what about real problems?
After showing that DPDT can find optimal trees by considering much less solutions and thus performing orders of magnitude less operations, we will study the generalization capabilities of the latter: do DPDT trees label unseen data accurately?

\subsection{DPDT optimizing capabilities}
\begin{table*}[ht]
    \centering
    \small
        \caption{Comparison of train accuracies of depth-3 trees and number of operations on classification tasks. For DPDT and Top-B, ``light'' configurations have split function parameters (8, 1, 1) ``full'' have parameters (8, 8, 8). We also include the mean train accuracy over 5 deep RL runs. \textbf{Bold} values are optimal accuracies and {\color{blue} blue} values are the largest non-optimal accuracies.}
    \label{tab:tree_comparison_combined}
    \begin{tabular}{l|cc||cc|cc|cc|c||cc|cc|cc}
    \toprule
    & & & \multicolumn{7}{c||}{\textbf{Accuracy}} & \multicolumn{6}{c}{\textbf{Operations}}\\
    \midrule
    & & & Opt & Greedy & \multicolumn{2}{c|}{DPDT} & \multicolumn{2}{c|}{Top-B} & \multicolumn{1}{c||}{Deep RL} & Opt & Greedy & \multicolumn{2}{c|}{DPDT} & \multicolumn{2}{c}{Top-B}\\
    \textbf{Dataset} & N & p & Quant-BnB & CART & light & full & light & full & Custard & Quant-BnB & CART & light & full & light & full \\
    \midrule
    room & 8103 & 16 & \textbf{0.992} & 0.968 & \color{blue} 0.991 & \textbf{0.992} & 0.990 & \textbf{0.992} & 0.715 &$10^6$ & 15 & 286 & 16100 & 111 & 16100 \\
    bean & 10888 & 16  & \textbf{0.871} & 0.777 & 0.812 & \color{blue} 0.853 & 0.804 & 0.841 & 0.182 & 5$\cdot 10^6$ & 15 & 295 & 25900 & 112 & 16800 \\
    eeg & 11984 & 14  & \textbf{0.708} & 0.666 & 0.689 & \color{blue} 0.706 & 0.684 & 0.699 & 0.549 & 2$\cdot 10^6$ & 13 & 289 & 26000 & 95 & 11000 \\
    avila & 10430 & 10  & \textbf{0.585} & 0.532 & \color{blue}0.574 & \textbf{0.585} & 0.563 & 0.572 & 0.409 & 3$\cdot 10^7$ & 9 & 268 & 24700 & 60 & 38900 \\
    magic & 15216 & 10 & \textbf{0.831} & 0.801 & 0.822 & \color{blue} 0.828 & 0.807 & 0.816 & 0.581 &6$\cdot 10^6$ & 15 & 298 & 28000 & 70 & 4190 \\
    htru & 14318 & 8  & \textbf{0.981} & 0.979 & 0.979 & \color{blue}0.980 & 0.979 & \color{blue}0.980 & 0.860 & 6$\cdot 10^7$ & 15 & 295 & 25300 & 55 & 2180 \\
    occup. & 8143 & 5 & \textbf{0.994} & 0.989 & 0.991 & \textbf{0.994} & 0.990 & \color{blue}0.992 & 0.647 & 7$\cdot 10^5$ & 13 & 280 & 16300 & 33 & 510 \\
    skin & 196045 & 3 & \textbf{0.969} & \color{blue}0.966 & \color{blue}0.966 & \color{blue}0.966 & \color{blue}0.966 & \color{blue}0.966 & 0.612 & 7$\cdot 10^4$ & 15 & 301 & 23300 & 20 & 126 \\
    fault & 1552 & 27 & \textbf{0.682} & 0.553 & 0.672 & \color{blue}0.674 & 0.672 & 0.673 & 0.303 & 9$\cdot 10^8$ & 13 & 295 & 24200 & 111 & 16800 \\
    segment & 1848 & 18 & \textbf{0.887} & 0.574 & 0.812 & \color{blue}0.879 & 0.786 & 0.825 & 0.137 & 2$\cdot 10^6$ & 7 & 220 & 16300 & 68 & 11400 \\
    page & 4378 & 10 &  \textbf{0.971} & 0.964 & \color{blue}0.970 & \color{blue}0.970 & 0.964 & 0.965 & 0.902 &$10^7$ & 15 & 298 & 22400 & 701 & 4050 \\
    bidding & 5056 & 9  & \textbf{0.993} & 0.981 & \color{blue}0.985 & \textbf{0.993} & 0.985 & \textbf{0.993} & 0.810 & 3$\cdot 10^5$ & 13 & 256 & 9360 & 58 & 2700 \\
    raisin & 720 & 7 & \textbf{0.894} & 0.869 & 0.879 & \color{blue}0.886 & 0.875 & 0.883 & 0.509 & 4$\cdot 10^6$ & 15 & 295 & 20900 & 48 & 1440 \\
    rice & 3048 & 7 & \textbf{0.938} & 0.933 & 0.934 & \color{blue}0.937 & 0.933 & 0.936 & 0.519 & 2$\cdot 10^7$ & 15 & 298 & 25500 & 49 & 1470 \\
    wilt & 4339 & 5 & \textbf{0.996} & 0.993 & 0.994 & \color{blue}0.995 & 0.994 & 0.994 & 0.984 &3$\cdot 10^5$ & 13 & 274 & 11300 & 33 & 465 \\
    bank & 1097 & 4 & \textbf{0.983} & 0.933 & 0.971 & \color{blue}0.980 & 0.951 & 0.974 & 0.496 & 6$\cdot 10^4$ & 13 & 271 & 7990 & 26 & 256 \\
    \bottomrule
    \end{tabular}
\end{table*}
\begin{figure}
    \centering
    \includegraphics[width=1\linewidth]{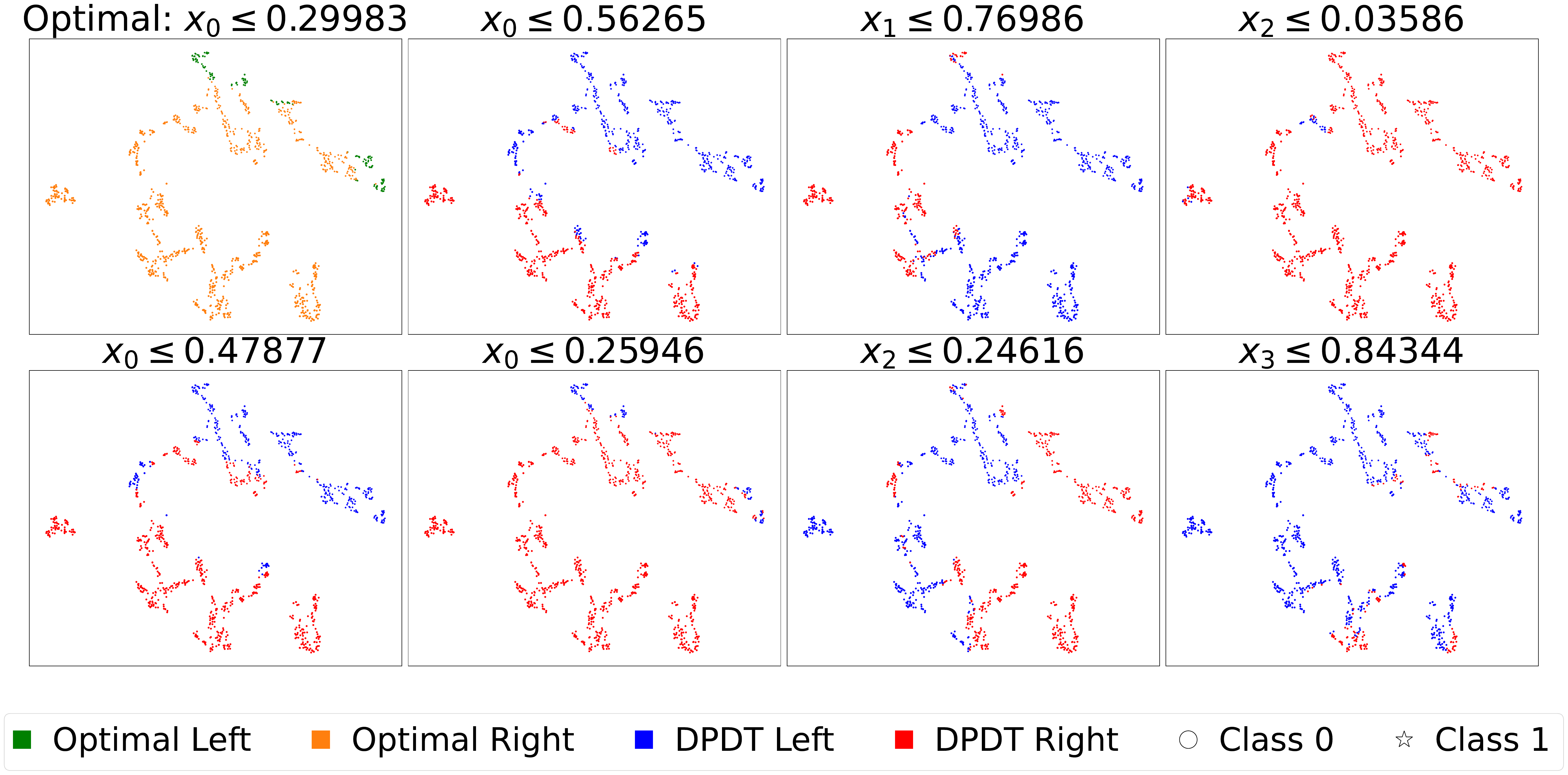}
    \caption{Root splits candidate obtained with DPDT compared to the optimal root split on the Bank dataset. Each split creates a partition of $p$-dimensional data that we projected in the $2$-dimensional space using t-SNE.}
    \Description{This figures shows a point cloud. The point cloud is similar to a clustered one with data from one class in a region and data from an other class in an other region. They are well separated.}
    \label{fig:splits_dpdt}
\end{figure}
\begin{figure}
    \centering
    \includegraphics[width=1\linewidth]{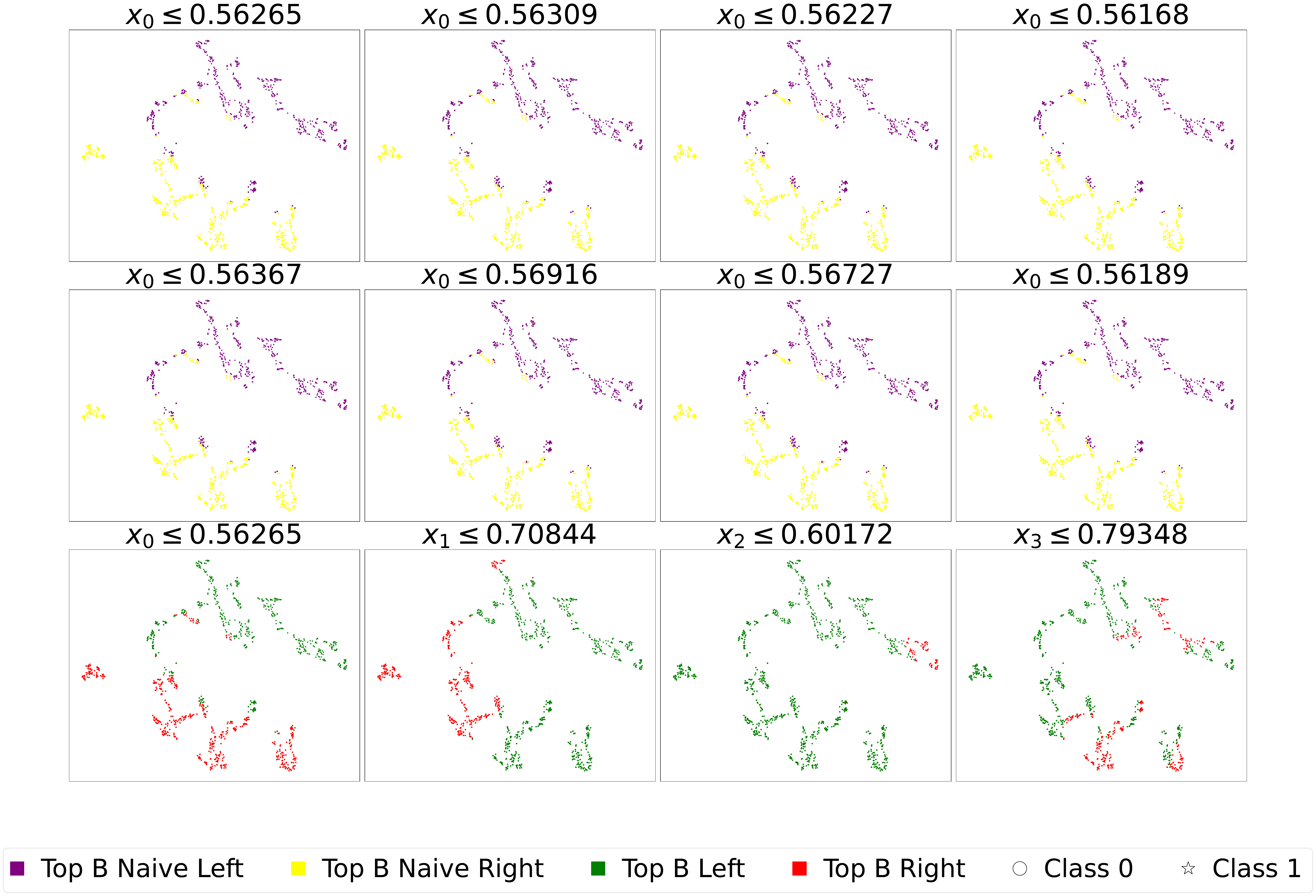}
    \caption{Root splits candidate obtained with Top-B\citep{topk} on the Bank dataset. Each split creates a partition of $p$-dimensional data that we projected using t-SNE.}
    \Description{This figures shows a point cloud. The point cloud is similar to a clustered one with data from one class in a region and data from an other class in an other region. They are well separated.}
    \label{fig:splits_topb}
\end{figure}

From an empirical perspective, it is key to evaluate DPDT training accuracy since optimal decision tree algorithms against which we wish to compare ourselves are designed to optimize the regularized training loss Eq.\ref{eq:suplearning}.

\subsubsection{Setup}
\paragraph{Metrics:} we are interested in the regularized training loss of algorithms optimizing Eq.\ref{eq:suplearning} with $\alpha=0$ and a maximum depth $D$. We are also interested in the number of key operations performed by each baseline, namely computing candidate split nodes for subsets of the training data. We disregard running times as solvers are implemented in different programming languages and/or using optimized code: operations count is more representative of an algorithm efficiency. We also qualitatively compare different decision trees root splits to some optimal root split.
\paragraph{Baselines:} we benchmark DPDT against greedy trees and optimal trees. For greedy trees we compare DPDT to CART \citep{breiman1984classification}. For optimal trees we compare DPDT to Quant-BnB \citep{quantbnb} which is the only solver specialized for depth 3 trees and continuous features. We also consider the non-greedy baseline Top-B \citep{topk}. Ideally, DPDT should have training accuracy close to the optimal tree while performing a number of operations close to the greedy algorithm. Furthermore, comparing DPDT to Top-B brings answers to which heuristic splits are better to consider. 

We use the CART algorithm implemented in \texttt{scikit-learn}~\cite{scikit-learn} in \texttt{CPython} with a maximum depth of 3. Optimal trees are obtained by running the \texttt{Julia} implementation of the Quant-BnB solver from~\cite{quantbnb} specialized in depth 3 trees for datasets with continuous features. We use a time limit of 24 hours per dataset. 
DPDT and Top-B trees are obtained with Algorithm~\ref{alg:dpdt} implemented in pure \texttt{Python} and the calls to CART and Top-B most informative splits generating functions from Section~\ref{sec:MDP} respectively.
We also include Custard, a deep  RL baseline ~\cite{topin2021iterative}. Custard fits a neural network online one datum at a time rather than solving exactly the MDP from Section~\ref{sec:MDP} which states are sets of data. Similarly to DPDT, Custard neural network policy is equivalent to a decision tree. We implement Custard with the DQN agent from \texttt{stable-baselines3}~\cite{stable-baselines3} and train until convergence. 

\paragraph{Datasets:} we us the same datasets as the Quant-BnB paper~\cite{quantbnb}.

\subsubsection{Observations}

\paragraph{Near-optimality.} Our experimental results demonstrate that unlike Deep RL, DPDT and Top-B approaches consistently improve upon greedy solutions while requiring significantly fewer operations than exact solvers. Looking at Table~\ref{tab:tree_comparison_combined}, we observe several key patterns:
first, light DPDT with 16 candidate root splits consistently outperforms the greedy baseline in all datasets. This shows that in practice DPDT can be strictly netter than CART outside of theorem \ref{thm:better_greedy} assumptions. 
Second, when comparing DPDT to Top-B, we see that DPDT generally achieves better accuracy for the same configuration. For example, on the bean dataset, full DPDT reaches 85.3\% accuracy while full Top-B achieves 84.1\%. This pattern holds on most datasets, suggesting that DPDT is more effective than selecting splits based purely on information gain.

Third, both approaches achieve impressive computational efficiency compared to exact solvers. While optimal solutions require between $10^4$ to $10^8$ operations, DPDT and Top-B typically need only $10^2$ to $10^4$ operations, a reduction of 2 to 4 orders of magnitude.
Notably, on several datasets (room, avila, occupancy, bidding), full DPDT matches or comes extremely close to optimal accuracy while requiring far fewer operations. For example, on the room dataset, full DPDT achieves the optimal accuracy of 99.2\% while reducing operations from $1.34\times10^6$ to $1.61\times10^4$.
These results demonstrate that DPDT provides an effective middle ground between greedy approaches and exact solvers, offering near-optimal solutions with reasonable computational requirements. While both DPDT and Top-B improve upon greedy solutions, DPDT CART-based split generation strategy appears to be particularly effective at finding high-quality solutions.

\paragraph{DPDT splits} To understand why the CART-based split generation yields more accurate DPDT trees than the Top-B heuristic, we visualize how splits partition the feature space (Figures \ref{fig:splits_dpdt}, \ref{fig:splits_topb}). We run both DPDT with splits from CART and DPDT with the Top-B most informative splits on the bank dataset. We use t-SNE to create a two-dimensional representations of the dataset partitions given by candidates root splits from CART and Top-B. 
The optimal root split for the depth-3 tree for bank--obtained with Quant-BnB--is shown on Figure \ref{fig:splits_dpdt} in the top-left subplot using green and orange colors for the resulting partitions. On the same figure we can see that the DPDT split generated with CART $x_0 \leq 0.259$ is very similar to the optimal root split. However, on Figure \ref{fig:splits_topb} we observe that no Top-B candidate splits resemble the optimal root and that in general Top-B split lack diversity: they always split along the same feature. We tried to enforce diversity by getting the most informative split \textit{per feature} but no candidate split resembles the optimal root.

\subsection{DPDT generalization capabilities}\label{sec:generalization}
\begin{figure*}
    \centering
    \begin{minipage}{0.24\textwidth}
        \includegraphics[width=\textwidth]{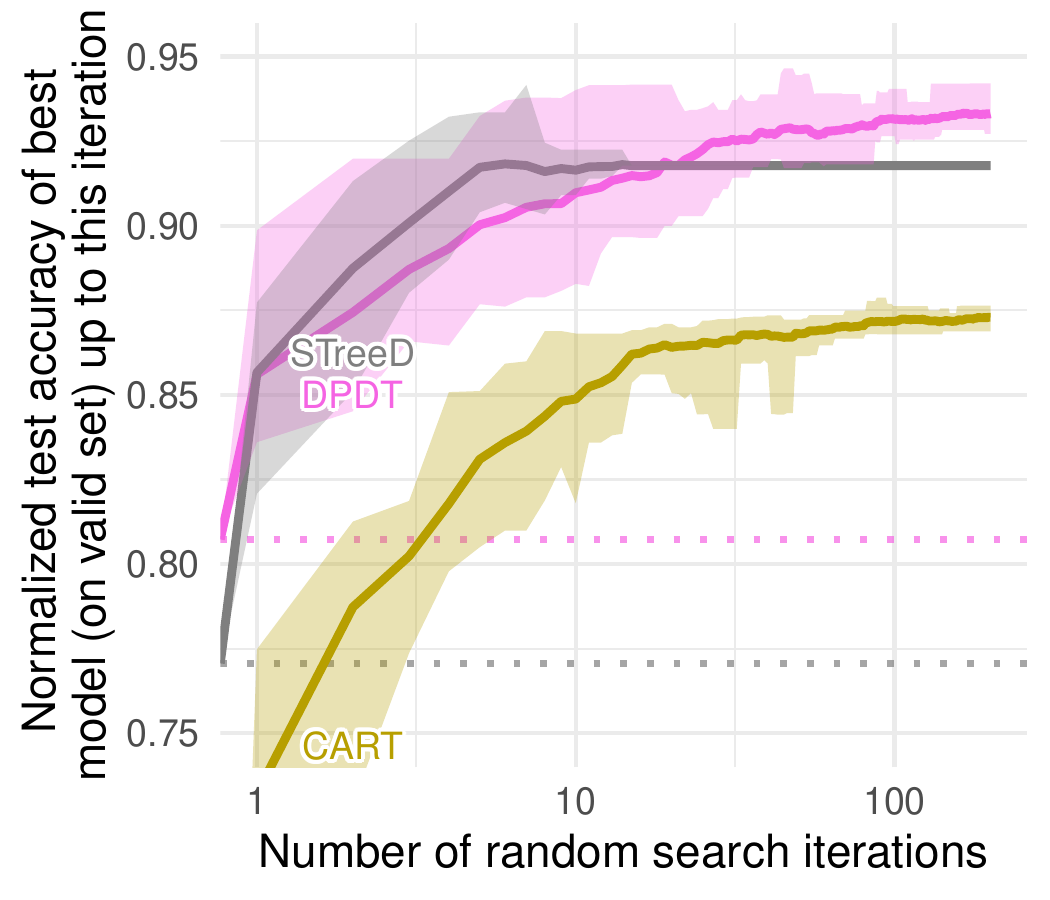}
        \subcaption{Single Tree Numerical}\label{fig:gen-num}
    \end{minipage}
    \begin{minipage}{0.24\textwidth}
        \includegraphics[width=\textwidth]{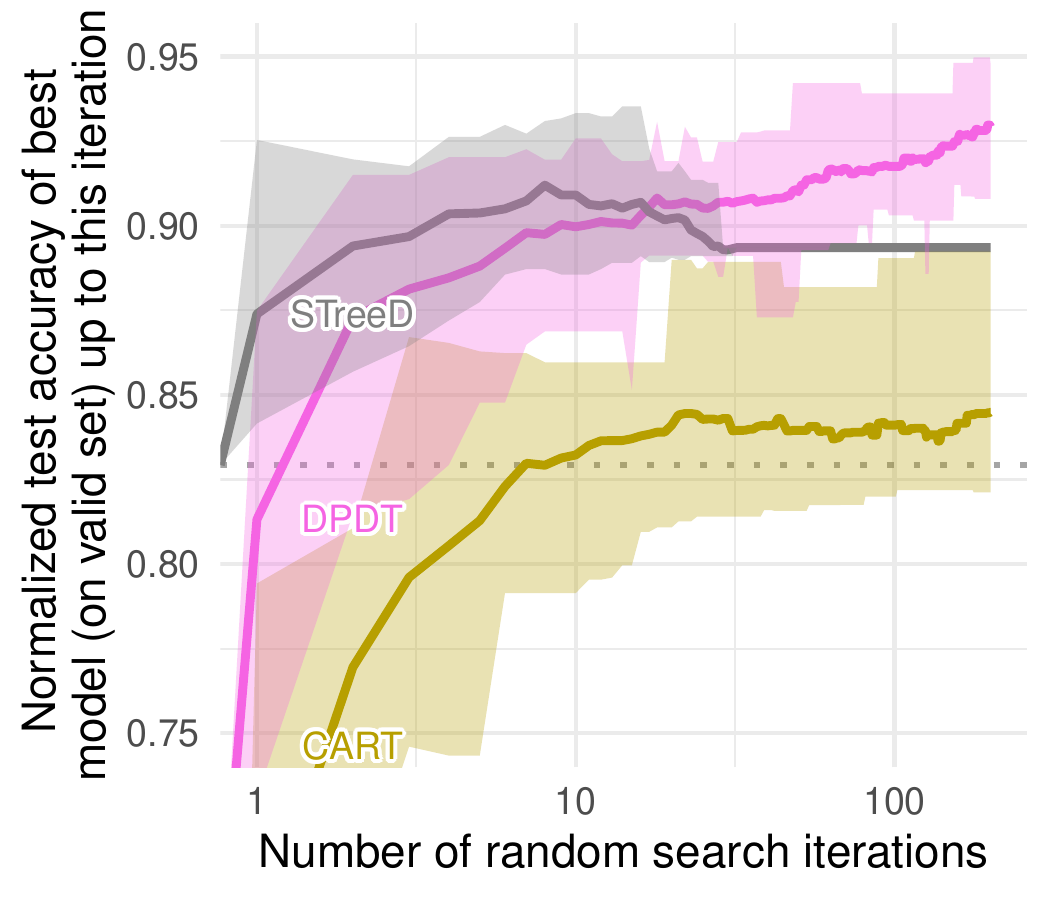}
        \subcaption{Single Tree Categorical}\label{fig:gen-cat}
    \end{minipage}
        \centering
    \begin{minipage}{0.24\textwidth}
        \includegraphics[width=\textwidth]{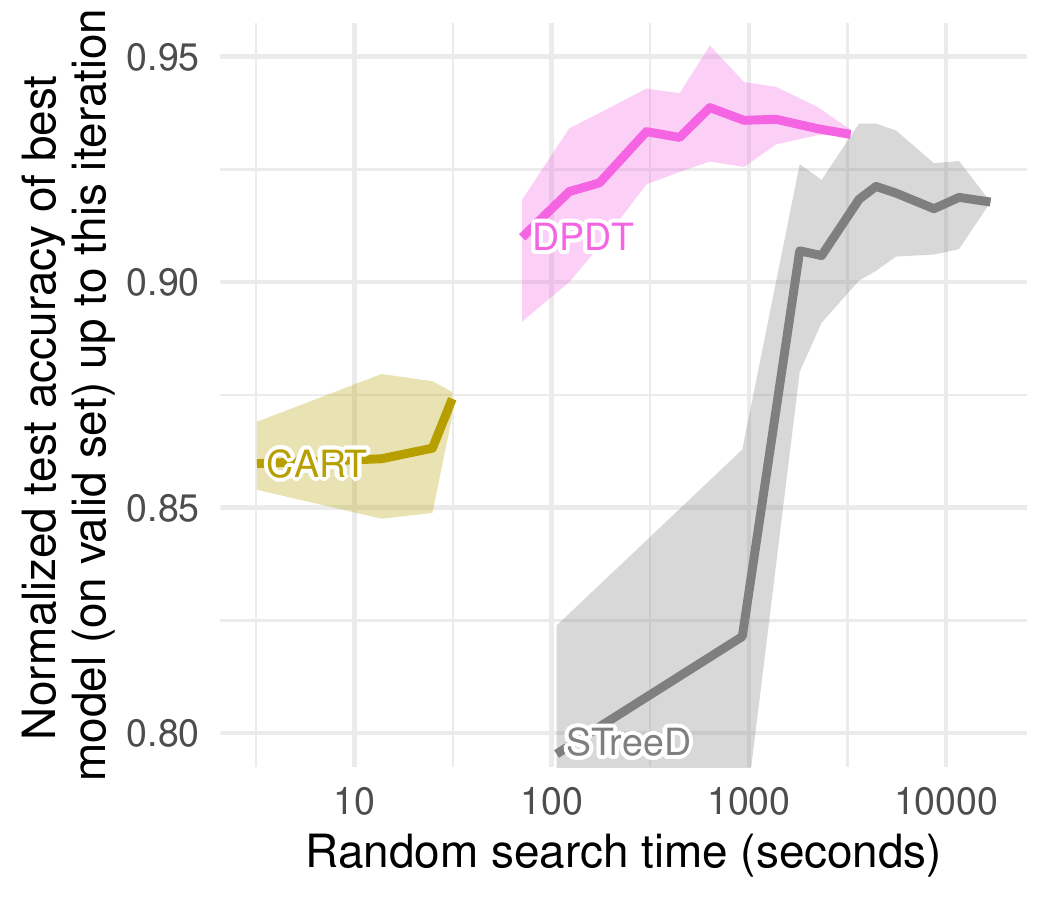}
        \subcaption{Single Tree Numerical}\label{fig:gen-num-time}
    \end{minipage}
    \begin{minipage}{0.24\textwidth}
        \includegraphics[width=\textwidth]{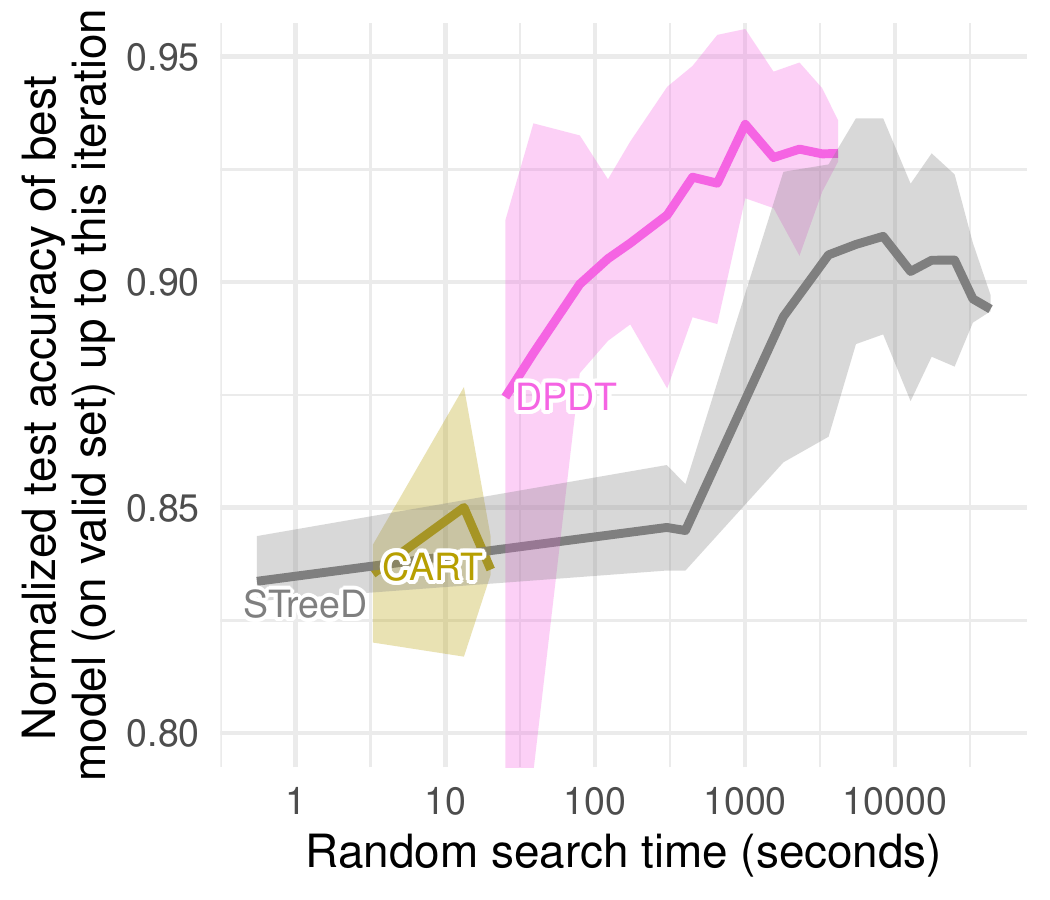}
        \subcaption{Single Tree Categorical}\label{fig:gen-cat-time}
    \end{minipage}
          \begin{minipage}{0.24\textwidth}
          \includegraphics[width=\textwidth]{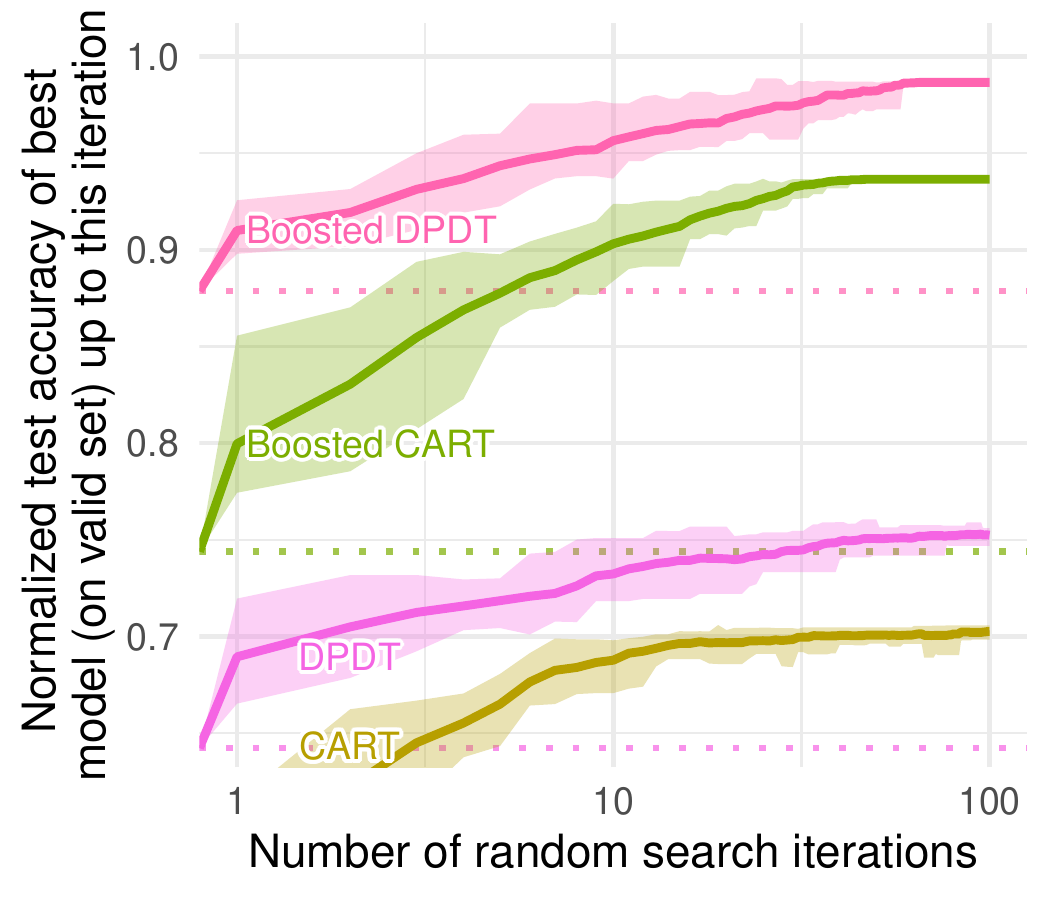}
          \subcaption{Boosting vs Single Tree Num.}\label{fig:boost-num}
      \end{minipage}
      \begin{minipage}{0.24\textwidth}
          \includegraphics[width=\textwidth]{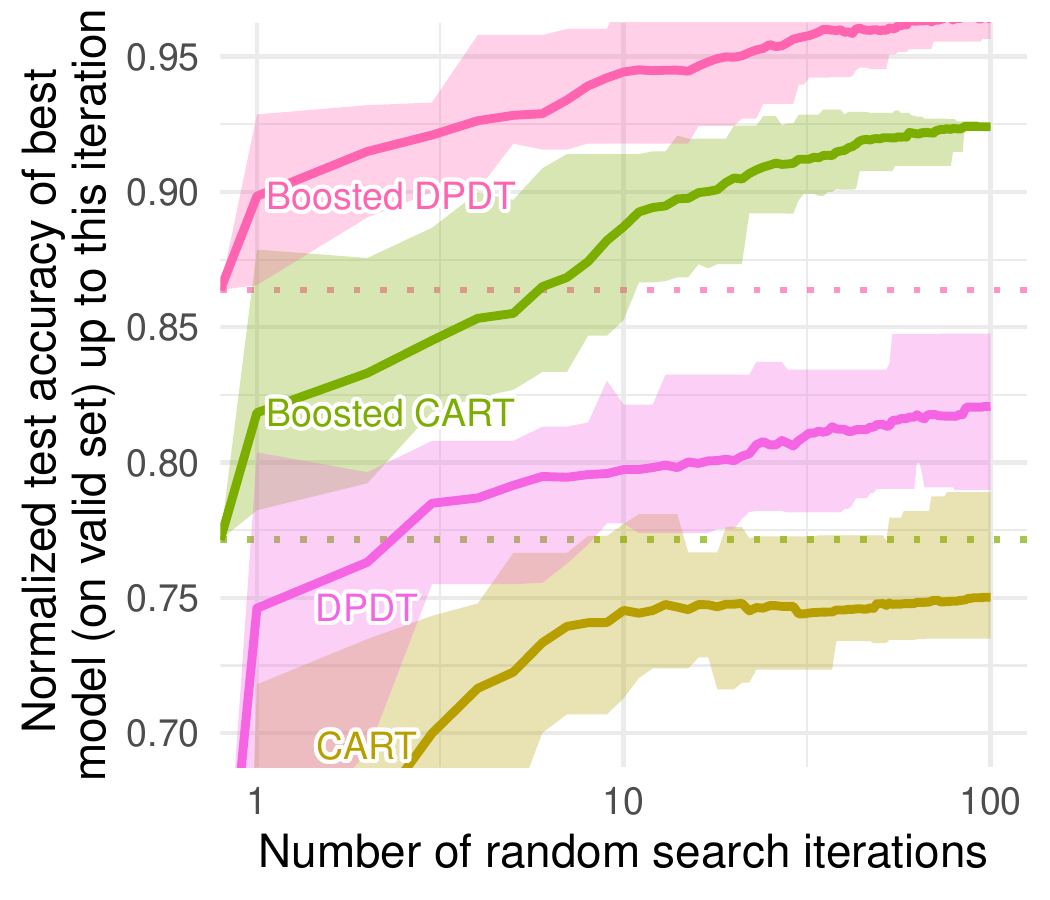}
          \subcaption{Boosting vs Single Tree Cat.}\label{fig:boost-cat}
      \end{minipage}
    \centering
    \begin{minipage}{0.24\textwidth}
        \includegraphics[width=\textwidth]{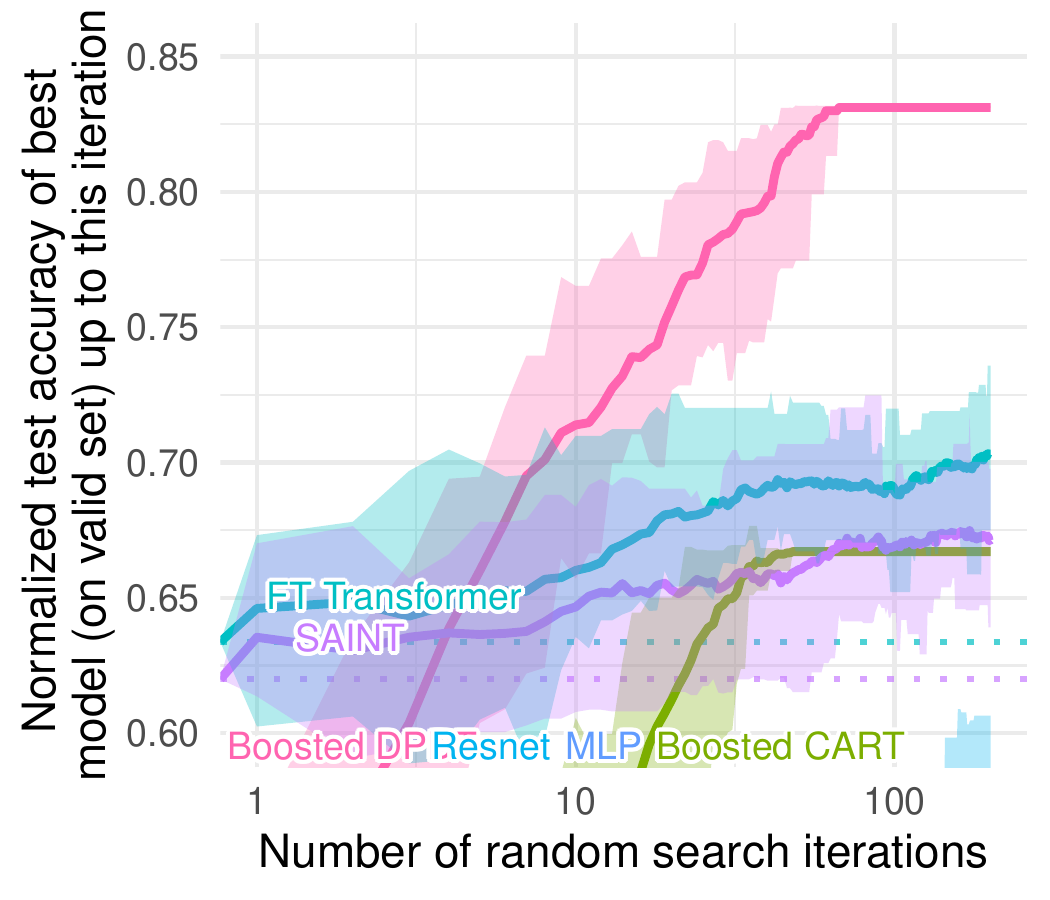}
        \subcaption{Boosting vs Neural Numerical}\label{fig:classif-num-all}
    \end{minipage}
    \begin{minipage}{0.24\textwidth}
        \includegraphics[width=\textwidth]{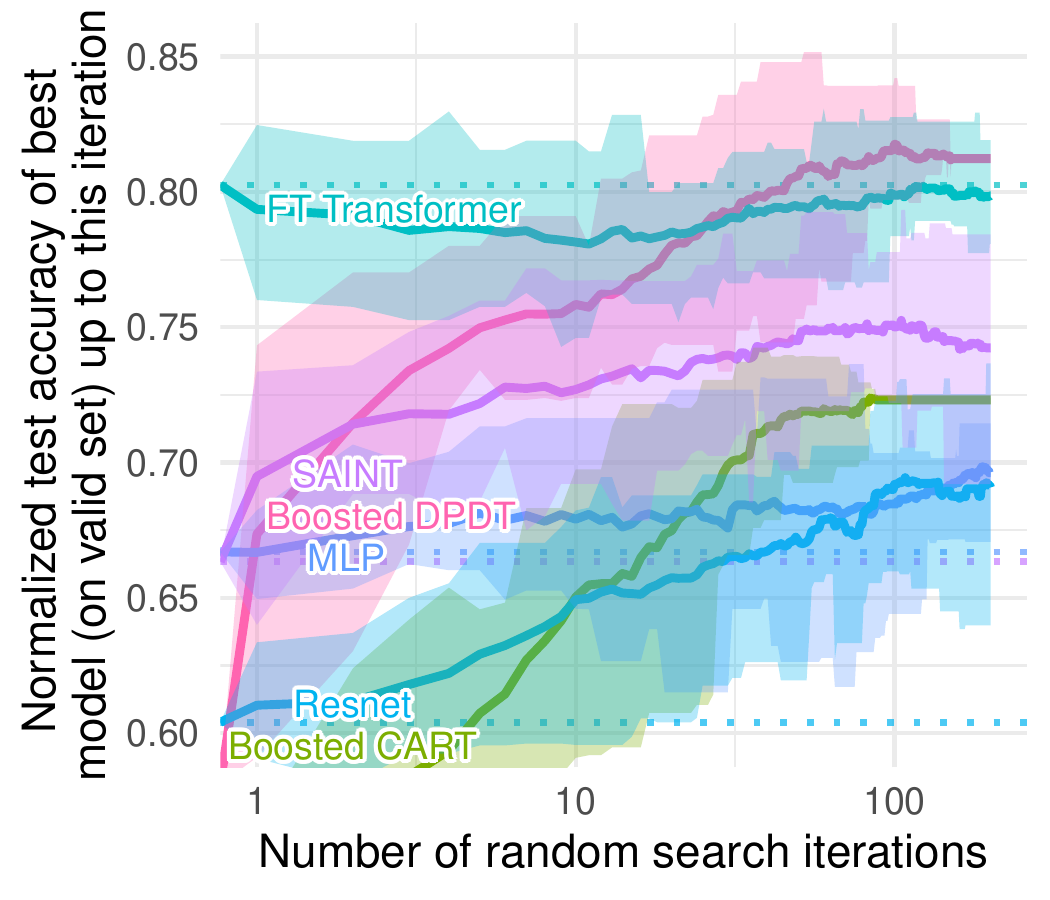}
        \subcaption{Boosting vs Neural Categorical}\label{fig:classif-cat-all}
    \end{minipage}
\caption{Benchmark on medium-sized datasets. Dotted lines correspond to the score of the default hyperparameters, which is also the first random search iteration. Each value corresponds to the test score of the best model (obtained on the validation set) after a specific number of random search iterations (a, b) or after a specific time spent doing random search (c, d), averaged on 15 shuffles of the random search order. The ribbon corresponds to the minimum and maximum scores on these 15 shuffles.}\label{fig:gen-classif}\Description{This figure contains many subplots. Each subplot contains curves that increase then flatten. Those curves show the evolution of the best solution found by decision tree induction algorithms after n iterations of random search.}
\end{figure*}

The goal of this section is to have a fair comparison of generalization capabilities of different tree induction algorithms. Fairness of comparison should take into account the number of hyperparameters, choice of programming language, intrinsic purposes of each algorithms (what are they designed to do?), the type of data they can read (categorical features or numerical features). We benchmark DPDT using~\cite{grinsztajn2022tree}. We choose this benchmark because it was used to establish XGBoost~\citep{xgb} as the SOTA tabular learning model. 

\subsubsection{Setup}

\paragraph{Metrics:} We re-use the code from \citep{grinsztajn2022tree}~\footnote{https://github.com/leogrin/tabular-benchmark}. It relies on random searches for hyper-parameter tuning \citep{pmlr-v28-bergstra13}. We run a random search of 100 iterations per dataset for each benchmarked tree algorithms. To study performance as a function of the number $n$ of random search iterations, we compute the best hyperparameter combination on the validation set on these $n$ iterations (for each model and dataset), and evaluate it on the test set. Following \cite{grinsztajn2022tree}, we do this 15 times while shuffling the random search order at each time. This gives us bootstrap-like estimates of the expected test score of the best tree found on the validation set after each number of random search iterations. In addition, we always start the random searches with the default hyperparameters of each tree induction aglorithm. We use the test set accuracy (classification) to measure model performance. The aggregation metric is discussed in details in \cite[Section 3]{grinsztajn2022tree}.

\paragraph{Datasets:} we use the datasets curated by \cite{grinsztajn2022tree}. They are available on \texttt{OpenML} \citep{10.1145/2641190.2641198} and described in details in \cite[Appendix A.1]{grinsztajn2022tree}. The attributes in these datasets are either 
numerical (a real number), or categorical (a symbolic values among a finite set of possible values). 
The considered datasets follow a strict selection \cite[Section 3]{grinsztajn2022tree} to focus on core learning challenges. Some datasets are very large (millions of samples) like Higgs or Covertype \citep{higgs_280,covertype_31}. To ensure non-trivial learning tasks, datasets where simple models (e.g.\@ logistic regression) performed within 5\% of complex models (e.g.\@ ResNet~\cite{resnet}, HistGradientBoosting~\cite{scikit-learn}) are removed. We use the same data partitioning strategy as~\cite{grinsztajn2022tree}: 70\% of samples are allocated for training, with the remaining 30\% split between validation (30\%) and test (70\%) sets. Both validation and test sets are capped at 50,000 samples for computational efficiency. All algorithms and hyperparameter combinations were evaluated on identical folds. Finally, while we focus on classification datasets in the main text, we provide results for regression problems in table~\ref{tab:regression} in the appendix.

\paragraph{Baselines:}
we benchmark DPDT against CART and STreeD when inducing trees of depth at most 5.  We use hyperparameter search spaces from \cite{komer-proc-scipy-2014} for CART and DPDT. For DPDT we additionally consider eight different splits functions parameters configurations for the maximum nodes in the calls to CART. Surprisingly, after computing the importance of each hyperparameter of DPDT, we found that the maximum node numbers in the calls to CART are only the third most important hyperparametrer behind classical ones like the minimum size of leaf nodes or the minimum impurity decrease (Table~\ref{tab:importance_comparison}). We use the \texttt{CPython} implementation of STreeD\footnote{PySTreeD: \url{https://github.com/AlgTUDelft/pystreed}}. All hyperparameter grids are given in table \ref{tab:tree_hyperparams} in the appendix.

\paragraph{Hardware:} experiments were conducted on a heterogeneous computing infrastructure made of AMD EPYC 7742/7702 64-Core and Intel Xeon processors, with hardware allocation based on availability and algorithm requirements. DPDT and CART random searches ran for the equivalent of 2-days while \texttt{PySTreeD} ran for 10-days.

\begin{table}
\centering
\small
\caption{Hyperparameters importance comparison. A description of the hyperparameters can be found in the scikit-learn documentation: \url{https://scikit-learn.org/stable/modules/generated/sklearn.tree.DecisionTreeClassifier.html}.}
\begin{tabular}{lccc}
\toprule
\textbf{Hyperparameter} & \textbf{DPDT (\%)}  & \textbf{CART (\%)} & \textbf{STreeD (\%)} \\
\midrule
min\_samples\_leaf & 35.05 & 33.50 & 50.50 \\
min\_impurity\_decrease & 24.60 & 24.52 & - \\
cart\_nodes\_list & 15.96 & - & - \\
max\_features & 11.16 & 18.06 & - \\
max\_depth & 7.98 & 10.19 & 0.00 \\
max\_leaf\_nodes & - & 7.84 & - \\
min\_samples\_split & 2.67 & 2.75 & - \\
min\_weight\_fraction\_leaf & 2.58 & 3.14 & - \\
max\_num\_nodes & - & - & 27.51 \\
n\_thresholds & - & - & 21.98 \\
\bottomrule
\end{tabular}
\label{tab:importance_comparison}
\end{table}

\subsubsection{Observations}

\paragraph{Generalization} In Figure \ref{fig:gen-classif}, we observe that DPDT learns better trees than CART and STreeD both in terms of generalization capabilities and in terms of computation cost. On Figures \ref{fig:gen-num} and \ref{fig:gen-cat}, DPDT obtains best generalization scores for classification on numerical and categorical data after 100 iterations of random hyperparameters search over both CART and STreeD. Similarly, we also present generalization scores as a function of compute time (instead of random search iterations). On Figures \ref{fig:gen-num-time} and \ref{fig:gen-cat-time}, despite being coded in the slowest language (\texttt{Python} vs. \texttt{CPython}), our implementation of DPDT finds the best overall model before all STreeD random searches even finish.
The results from Figure \ref{fig:gen-classif} are appealing for machine learning practitioners and data scientists that have to do hyperparameters search to find good models for their data while having computation constrains.

Now that we have shown that DPDT is extremely efficient to learn shallow decision trees that generalize well to unseen data, it is fair to ask if DPDT can also learn deep trees on very large datasets.

\begin{table}
\centering
\caption{Depth-10 decision trees for the KDD 1999 cup dataset.}
\label{tab:model-comparison}
\small
\begin{tabular}{lccc}
\hline
\textbf{Model} & \textbf{Test Accuracy (\%)} & \textbf{Time (s)} & \textbf{Memory (MB)} \\
\hline
DPDT-(4,) & \textbf{91.30} & 339.85 & 5054 \\
DPDT-(4,4,) & \textbf{91.30} & 881.07 & 5054 \\
CART & 91.29 & 25.36 & 1835 \\
GOSDT-$\alpha=0.0005$& 65.47 & 5665.47 & 1167 \\
GOSDT-$\alpha=0.001$ & 65.45 & 5642.85 & 1167 \\
\hline
\end{tabular}
\end{table}

\paragraph{Deeper trees on bigger datasets.} We also stress test DPDT by inducing deep trees of depth 10 for the KDD 1999 cup dataset\footnote{\url{http://kdd.ics.uci.edu/databases/kddcup99/kddcup99.html}}. The training set has 5 million rows and a mix of 80 continuous and categorical features representing network intrusions. We fit DPDT with 4 split candidates for the root node (DPDT-(4,)) and with 4 split candidates for the root and for each of the internal nodes at depth 1 (DPDT-(4,4,)). We compare DPDT to CART with a maximum depth of 10 and to GOSDT\footnote{Code available at: \url{https://github.com/ubc-systopia/gosdt-guesses}}~\cite{McTavish_Zhong_Achermann_Karimalis_Chen_Rudin_Seltzer_2022} with different regularization values $\alpha$. GOSDT first trains a tree ensemble to binarize a dataset and then solve for the optimal decision tree of depth 10 on the binarized problem. In Table~\ref{tab:model-comparison} we report the test accuracy of each tree on the KDD 1999 cup test set. We also report the memory peak during training and the training duration (all experiments are run on the same CPU). We observe that DPDT can improve over CART even for deep trees and large datasets while using reasonable time and memory. Furthermore, Table \ref{tab:model-comparison} highlights the limitation of optimal trees for practical problems when the dataset is not binary. We observed that GOSDT could not find a good binarization of the dataset even when increasing the budget of the tree ensemble up to the point where most of the computations are spent on fitting the ensemble (see more details about this phenomenon in \cite[Section 5.3]{McTavish_Zhong_Achermann_Karimalis_Chen_Rudin_Seltzer_2022}). In table \ref{tab:more-res} in the appendix, we also show that DPDT performs better than optimal trees for natively binary datasets. In the next section we study the performance of boosted DPDT trees.

\section{Application of DPDT to Boosting}

In the race for machine learning algorithms for tabular data, boosting procedures are often considered the go-to methods for classification and regression problems. Boosting algorithms \cite{FREUND1997119,stcohFriedman,FriedmanBoosting} sequentially add weak learners to an ensemble called strong learner. The development of those boosting algorithms has focused on what data to train newly added weak learners \citep{stcohFriedman,FriedmanBoosting},  or on efficient implementation of those algorithms \citep{xgb,10.5555/3327757.3327770}. We show next that Boosted-DPDT (boosting DPDT trees with AdaBoost \cite{FREUND1997119}) improves over recent deep learning algorithms. 

\subsection{Boosted-DPDT}\label{sec:boosting}

We benchmark Boosted-DPDT with the same datasets, metrics, and hardware as in the previous section on single-tree training. Second, we verify the competitiveness of Boosted-DPDT with other models such as deep learning ones (SAINT~\cite{somepalli2021saintimprovedneuralnetworks} and other deep learning architectures from \cite{resnet}). 

On Figures \ref{fig:boost-num} and \ref{fig:boost-cat} we can notice 2 properties of DPDT. First, as in any boosting procedure, Boosted-DPDT outperforms its weak counterpart DPDT. This serves as a sanity check for boosting DPDT trees. Second, it is clear that boosting DPDT trees yields better models than boosting CART trees on both numerical and categorical data. Figures~\ref{fig:classif-num-all} and \ref{fig:classif-cat-all} show that boosting DPDT trees using the default AdaBoost procedure~\cite{FREUND1997119} is enough to learn models outperforming deep learning algorithms on datasets with numerical features and models in the top-tier on datasets with categorical features. This show great promise for models obtained when boosting DPDT trees with more advanced procedures.

\subsection{(X)GB-DPDT}
\begin{figure}
    \centering
    \includegraphics[trim={0 0 0 0},clip,width=0.8\linewidth]{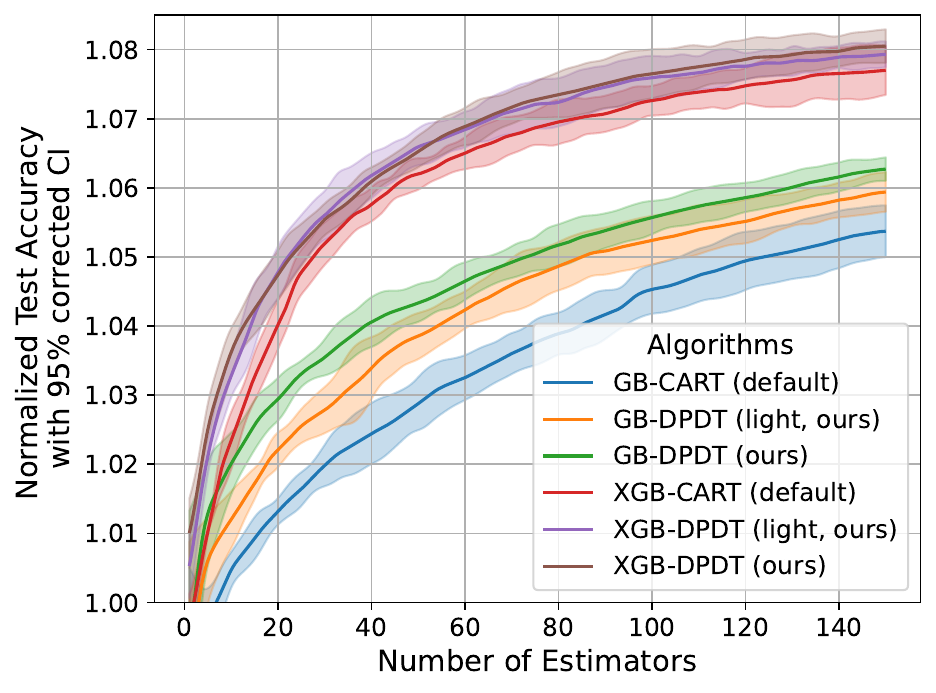}
    \caption{Aggregated mean test accuracies of Gradient Boosting models as a function of the number of single trees.}
    \Description{This figures contains increasing then flattening curves representing the learning process of gradient boosting algorithms.}
    \label{fig:gb}
\end{figure}

We also boost DPDT trees with Gradient Boosting and eXtreme Gradient Boosting~\cite{FriedmanBoosting,stcohFriedman,xgb}(X(GB)-DPDT). For each dataset from~\cite{grinsztajn2022tree}, we trained (X)GB-DPDT models with 150 boosted single DPDT trees and a maximum depth of 3 for each. We evaluate two DPDT configurations for the single trees: light (DPDT-(4, 1, 1)) and the default (DPDT-(4,4,4)). We compare (X)GB-DPDT to (X)GB-CART: 150 boosted CART trees with maximum depth of 3 and default hyperparameters for each. All models use a learning rate of 0.1. For each each dataset, we normalize all boosted models scores by the accuracy of a single depth-3 CART decision tree and aggregate the results: the final curves represent the mean performance across all datasets, with confidence intervals computed using 5 different random seeds.

Figure~\ref{fig:gb} shows that similarly to simple boosting procedures like AdaBoost, more advanced ones like (eXtreme) Gradient Boosting yields better models when the weak learners are DPDT trees rather than greedy trees. This is a motivation to develop efficient implementation of (eXtreme) Gradient Boosting with DPDT as the weak learning algorithm to perform extensive benchmarking following \cite{grinsztajn2022tree} and potentially claim the state-of-the-art.

\section{Conclusion}

In this paper, we introduced Dynamic Programming Decision Trees (DPDT), a novel framework that bridges the gap between greedy and optimal decision tree algorithms. By formulating tree induction as an MDP and employing adaptive split generation based on CART, DPDT achieves near-optimal training loss with significantly reduced computational complexity compared to existing optimal tree solvers. Furthermore, we prove that DPDT can learn strictly more accurate trees than CART. 

Most importantly, extensive benchmarking on varied large and difficult enough datasets showed that DPDT trees and boosted DPDT trees generalize better than other baselines. To conclude, we showed that DPDT is a promising machine learning algorithm. 

The key future work would be to make DPDT industry-ready by implementing it in \texttt{C} and or making it compatible with the most advanced implementation of e.g.\@ XGBoost.

\section{Acknowledgments}

Philippe Preux acknowledges the R\'egion Hauts-de-France CPER project CornelIA. Experiments presented in this paper were carried out using the Grid'5000 testbed (\url{https://www.grid5000.fr}). Hector Kohler acknowledges the AI\_PhD@Lille ANR funding.
\bibliographystyle{ACM-Reference-Format}
\bibliography{sample-base}


\begin{thebibliography}{62}


\ifx \showCODEN    \undefined \def \showCODEN     #1{\unskip}     \fi
\ifx \showISBNx    \undefined \def \showISBNx     #1{\unskip}     \fi
\ifx \showISBNxiii \undefined \def \showISBNxiii  #1{\unskip}     \fi
\ifx \showISSN     \undefined \def \showISSN      #1{\unskip}     \fi
\ifx \showLCCN     \undefined \def \showLCCN      #1{\unskip}     \fi
\ifx \shownote     \undefined \def \shownote      #1{#1}          \fi
\ifx \showarticletitle \undefined \def \showarticletitle #1{#1}   \fi
\ifx \showURL      \undefined \def \showURL       {\relax}        \fi
\providecommand\bibfield[2]{#2}
\providecommand\bibinfo[2]{#2}
\providecommand\natexlab[1]{#1}
\providecommand\showeprint[2][]{arXiv:#2}

\bibitem[Aghaei et~al\mbox{.}(2020)]%
        {mfoct}
\bibfield{author}{\bibinfo{person}{Sina Aghaei}, \bibinfo{person}{Andres
  Gomez}, {and} \bibinfo{person}{Phebe Vayanos}.}
  \bibinfo{year}{2020}\natexlab{}.
\newblock \bibinfo{title}{Learning Optimal Classification Trees: Strong
  Max-Flow Formulations}.
\newblock
\showeprint[arxiv]{2002.09142}~[stat.ML]


\bibitem[Bastani et~al\mbox{.}(2018)]%
        {viper}
\bibfield{author}{\bibinfo{person}{Osbert Bastani}, \bibinfo{person}{Yewen Pu},
  {and} \bibinfo{person}{Armando Solar-Lezama}.}
  \bibinfo{year}{2018}\natexlab{}.
\newblock \showarticletitle{Verifiable Reinforcement Learning via Policy
  Extraction}. In \bibinfo{booktitle}{\emph{NeurIPS}}.
\newblock


\bibitem[Bellman(1958)]%
        {BELLMAN1958228}
\bibfield{author}{\bibinfo{person}{Richard Bellman}.}
  \bibinfo{year}{1958}\natexlab{}.
\newblock \showarticletitle{Dynamic programming and stochastic control
  processes}.
\newblock \bibinfo{journal}{\emph{Information and Control}}
  (\bibinfo{year}{1958}).
\newblock


\bibitem[Bergstra et~al\mbox{.}(2013)]%
        {pmlr-v28-bergstra13}
\bibfield{author}{\bibinfo{person}{James Bergstra}, \bibinfo{person}{Daniel
  Yamins}, {and} \bibinfo{person}{David Cox}.} \bibinfo{year}{2013}\natexlab{}.
\newblock \showarticletitle{Making a Science of Model Search: Hyperparameter
  Optimization in Hundreds of Dimensions for Vision Architectures}. In
  \bibinfo{booktitle}{\emph{ICML}}.
\newblock


\bibitem[Bertsimas and Dunn(2017)]%
        {oct}
\bibfield{author}{\bibinfo{person}{Dimitris Bertsimas} {and}
  \bibinfo{person}{Jack Dunn}.} \bibinfo{year}{2017}\natexlab{}.
\newblock \showarticletitle{Optimal classification trees}.
\newblock \bibinfo{journal}{\emph{Machine Learning}} (\bibinfo{year}{2017}).
\newblock


\bibitem[Blackard(1998)]%
        {covertype_31}
\bibfield{author}{\bibinfo{person}{Jock Blackard}.}
  \bibinfo{year}{1998}\natexlab{}.
\newblock \bibinfo{title}{{Covertype}}.
\newblock \bibinfo{howpublished}{UCI Machine Learning Repository}.
\newblock


\bibitem[Blanc et~al\mbox{.}(2023)]%
        {topk}
\bibfield{author}{\bibinfo{person}{Guy Blanc}, \bibinfo{person}{Jane Lange},
  \bibinfo{person}{Chirag Pabbaraju}, \bibinfo{person}{Colin Sullivan},
  \bibinfo{person}{Li-Yang Tan}, {and} \bibinfo{person}{Mo Tiwari}.}
  \bibinfo{year}{2023}\natexlab{}.
\newblock \showarticletitle{Harnessing the power of choices in decision tree
  learning}. In \bibinfo{booktitle}{\emph{NeurIPS}}.
\newblock


\bibitem[Breiman et~al\mbox{.}(1984)]%
        {breiman1984classification}
\bibfield{author}{\bibinfo{person}{L Breiman}, \bibinfo{person}{JH Friedman},
  \bibinfo{person}{R Olshen}, {and} \bibinfo{person}{CJ Stone}.}
  \bibinfo{year}{1984}\natexlab{}.
\newblock \bibinfo{booktitle}{\emph{Classification and Regression Trees}}.
\newblock \bibinfo{publisher}{Wadsworth}.
\newblock


\bibitem[{B}rent {K}omer et~al\mbox{.}(2014)]%
        {komer-proc-scipy-2014}
\bibfield{author}{\bibinfo{person}{{B}rent {K}omer}, \bibinfo{person}{{J}ames
  {B}ergstra}, {and} \bibinfo{person}{{C}hris {E}liasmith}.}
  \bibinfo{year}{2014}\natexlab{}.
\newblock \showarticletitle{{H}yperopt-{S}klearn: {A}utomatic {H}yperparameter
  {C}onfiguration for {S}cikit-{L}earn}. In
  \bibinfo{booktitle}{\emph{{P}roceedings of the 13th {P}ython in {S}cience
  {C}onference}}.
\newblock


\bibitem[Bressan et~al\mbox{.}(2024)]%
        {pmlr-v247-bressan24a}
\bibfield{author}{\bibinfo{person}{Marco Bressan}, \bibinfo{person}{Nicol{\`o}
  Cesa-Bianchi}, \bibinfo{person}{Emmanuel Esposito}, \bibinfo{person}{Yishay
  Mansour}, \bibinfo{person}{Shay Moran}, {and} \bibinfo{person}{Maximilian
  Thiessen}.} \bibinfo{year}{2024}\natexlab{}.
\newblock \showarticletitle{A Theory of Interpretable Approximations}. In
  \bibinfo{booktitle}{\emph{COLT}}.
\newblock


\bibitem[Buitinck et~al\mbox{.}(2013)]%
        {sklearn_api}
\bibfield{author}{\bibinfo{person}{Lars Buitinck}, \bibinfo{person}{Gilles
  Louppe}, \bibinfo{person}{Mathieu Blondel}, \bibinfo{person}{Fabian
  Pedregosa}, \bibinfo{person}{Andreas Mueller}, \bibinfo{person}{Olivier
  Grisel}, \bibinfo{person}{Vlad Niculae}, \bibinfo{person}{Peter
  Prettenhofer}, \bibinfo{person}{Alexandre Gramfort}, \bibinfo{person}{Jaques
  Grobler}, \bibinfo{person}{Robert Layton}, \bibinfo{person}{Jake VanderPlas},
  \bibinfo{person}{Arnaud Joly}, \bibinfo{person}{Brian Holt}, {and}
  \bibinfo{person}{Ga{\"{e}}l Varoquaux}.} \bibinfo{year}{2013}\natexlab{}.
\newblock \showarticletitle{{API} design for machine learning software:
  experiences from the scikit-learn project}. In \bibinfo{booktitle}{\emph{ECML
  PKDD Workshop: Languages for Data Mining and Machine Learning}}.
\newblock


\bibitem[Carreira-Perpi\~{n}\'{a}n and Zharmagambetov(2020)]%
        {10.1145/3412815.3416882}
\bibfield{author}{\bibinfo{person}{Miguel~\'{A} Carreira-Perpi\~{n}\'{a}n}
  {and} \bibinfo{person}{Arman Zharmagambetov}.}
  \bibinfo{year}{2020}\natexlab{}.
\newblock \showarticletitle{Ensembles of Bagged TAO Trees Consistently Improve
  over Random Forests, AdaBoost and Gradient Boosting}. In
  \bibinfo{booktitle}{\emph{FODS}}.
\newblock


\bibitem[Carreira-Perpinan and Tavallali(2018)]%
        {NEURIPS2018_185c29dc}
\bibfield{author}{\bibinfo{person}{Miguel~A. Carreira-Perpinan} {and}
  \bibinfo{person}{Pooya Tavallali}.} \bibinfo{year}{2018}\natexlab{}.
\newblock \showarticletitle{Alternating optimization of decision trees, with
  application to learning sparse oblique trees}. In
  \bibinfo{booktitle}{\emph{NeurIPS}}.
\newblock


\bibitem[Chaouki et~al\mbox{.}(2025)]%
        {chaouki2024branchesfastdynamicprogramming}
\bibfield{author}{\bibinfo{person}{Ayman Chaouki}, \bibinfo{person}{Jesse
  Read}, {and} \bibinfo{person}{Albert Bifet}.}
  \bibinfo{year}{2025}\natexlab{}.
\newblock \showarticletitle{Branches: A Fast Dynamic Programming and Branch
  {\&} Bound Algorithm for Optimal Decision Trees}. In
  \bibinfo{booktitle}{\emph{ICML}}.
\newblock


\bibitem[Chen and Guestrin(2016)]%
        {xgb}
\bibfield{author}{\bibinfo{person}{Tianqi Chen} {and} \bibinfo{person}{Carlos
  Guestrin}.} \bibinfo{year}{2016}\natexlab{}.
\newblock \showarticletitle{XGBoost: A Scalable Tree Boosting System}. In
  \bibinfo{booktitle}{\emph{KDD}}.
\newblock


\bibitem[Costa and Pedreira(2023)]%
        {costa2023recent}
\bibfield{author}{\bibinfo{person}{Vin{\'\i}cius~G Costa} {and}
  \bibinfo{person}{Carlos~E Pedreira}.} \bibinfo{year}{2023}\natexlab{}.
\newblock \showarticletitle{Recent advances in decision trees: An updated
  survey}.
\newblock \bibinfo{journal}{\emph{Artificial Intelligence Review}}
  (\bibinfo{year}{2023}).
\newblock


\bibitem[Demirovi\'{c} et~al\mbox{.}(2023)]%
        {blossom}
\bibfield{author}{\bibinfo{person}{Emir Demirovi\'{c}},
  \bibinfo{person}{Emmanuel Hebrard}, {and} \bibinfo{person}{Louis Jean}.}
  \bibinfo{year}{2023}\natexlab{}.
\newblock \showarticletitle{Blossom: an Anytime Algorithm for Computing Optimal
  Decision Trees}. In \bibinfo{booktitle}{\emph{ICML}}.
\newblock


\bibitem[Demirovic et~al\mbox{.}(2022)]%
        {murtree}
\bibfield{author}{\bibinfo{person}{Emir Demirovic}, \bibinfo{person}{Anna
  Lukina}, \bibinfo{person}{Emmanuel Hebrard}, \bibinfo{person}{Jeffrey Chan},
  \bibinfo{person}{James Bailey}, \bibinfo{person}{Christopher Leckie},
  \bibinfo{person}{Kotagiri Ramamohanarao}, {and} \bibinfo{person}{Peter~J.
  Stuckey}.} \bibinfo{year}{2022}\natexlab{}.
\newblock \showarticletitle{MurTree: Optimal Decision Trees via Dynamic
  Programming and Search}.
\newblock \bibinfo{journal}{\emph{Journal of Machine Learning Research}}
  (\bibinfo{year}{2022}).
\newblock


\bibitem[Doshi-Velez and Kim(2017)]%
        {rigourous}
\bibfield{author}{\bibinfo{person}{Finale Doshi-Velez} {and}
  \bibinfo{person}{Been Kim}.} \bibinfo{year}{2017}\natexlab{}.
\newblock \bibinfo{title}{Towards A Rigorous Science of Interpretable Machine
  Learning}.
\newblock
\showeprint[arxiv]{1702.08608}~[stat.ML]


\bibitem[Dulac-Arnold et~al\mbox{.}(2012)]%
        {Dulac_Arnold_2011}
\bibfield{author}{\bibinfo{person}{Gabriel Dulac-Arnold},
  \bibinfo{person}{Ludovic Denoyer}, \bibinfo{person}{Philippe Preux}, {and}
  \bibinfo{person}{Patrick Gallinari}.} \bibinfo{year}{2012}\natexlab{}.
\newblock \showarticletitle{Sequential approaches for learning datum-wise
  sparse representations}.
\newblock \bibinfo{journal}{\emph{Machine Learning}} (\bibinfo{year}{2012}).
\newblock


\bibitem[Esposito et~al\mbox{.}(1997)]%
        {pruning1}
\bibfield{author}{\bibinfo{person}{Floriana Esposito}, \bibinfo{person}{Donato
  Malerba}, \bibinfo{person}{Giovanni Semeraro}, {and} \bibinfo{person}{J
  Kay}.} \bibinfo{year}{1997}\natexlab{}.
\newblock \showarticletitle{A comparative analysis of methods for pruning
  decision trees}.
\newblock \bibinfo{journal}{\emph{IEEE transactions on pattern analysis and
  machine intelligence}} (\bibinfo{year}{1997}).
\newblock


\bibitem[Freund and Schapire(1997)]%
        {FREUND1997119}
\bibfield{author}{\bibinfo{person}{Yoav Freund} {and} \bibinfo{person}{Robert~E
  Schapire}.} \bibinfo{year}{1997}\natexlab{}.
\newblock \showarticletitle{A Decision-Theoretic Generalization of On-Line
  Learning and an Application to Boosting}.
\newblock \bibinfo{journal}{\emph{J. Comput. System Sci.}}
  (\bibinfo{year}{1997}).
\newblock


\bibitem[Friedman(2001)]%
        {FriedmanBoosting}
\bibfield{author}{\bibinfo{person}{Jerome~H. Friedman}.}
  \bibinfo{year}{2001}\natexlab{}.
\newblock \showarticletitle{Greedy Function Approximation: A Gradient Boosting
  Machine}.
\newblock \bibinfo{journal}{\emph{The Annals of Statistics}}
  (\bibinfo{year}{2001}).
\newblock


\bibitem[Friedman(2002)]%
        {stcohFriedman}
\bibfield{author}{\bibinfo{person}{Jerome~H. Friedman}.}
  \bibinfo{year}{2002}\natexlab{}.
\newblock \showarticletitle{Stochastic gradient boosting}.
\newblock \bibinfo{journal}{\emph{Comput. Stat. Data Anal.}}
  (\bibinfo{year}{2002}).
\newblock


\bibitem[Garlapati et~al\mbox{.}(2015)]%
        {garlapati2015reinforcementlearningapproachonline}
\bibfield{author}{\bibinfo{person}{Abhinav Garlapati}, \bibinfo{person}{Aditi
  Raghunathan}, \bibinfo{person}{Vaishnavh Nagarajan}, {and}
  \bibinfo{person}{Balaraman Ravindran}.} \bibinfo{year}{2015}\natexlab{}.
\newblock \bibinfo{title}{A Reinforcement Learning Approach to Online Learning
  of Decision Trees}.
\newblock
\showeprint[arxiv]{1507.06923}~[cs.LG]


\bibitem[Gorishniy et~al\mbox{.}(2024)]%
        {resnet}
\bibfield{author}{\bibinfo{person}{Yury Gorishniy}, \bibinfo{person}{Ivan
  Rubachev}, \bibinfo{person}{Valentin Khrulkov}, {and} \bibinfo{person}{Artem
  Babenko}.} \bibinfo{year}{2024}\natexlab{}.
\newblock \showarticletitle{Revisiting deep learning models for tabular data}.
  In \bibinfo{booktitle}{\emph{NeurIPS}}.
\newblock


\bibitem[Grinsztajn et~al\mbox{.}(2022)]%
        {grinsztajn2022tree}
\bibfield{author}{\bibinfo{person}{L{\'e}o Grinsztajn},
  \bibinfo{person}{Edouard Oyallon}, {and} \bibinfo{person}{Ga{\"e}l
  Varoquaux}.} \bibinfo{year}{2022}\natexlab{}.
\newblock \showarticletitle{Why do tree-based models still outperform deep
  learning on typical tabular data?}
\newblock \bibinfo{journal}{\emph{Advances in neural information processing
  systems}} (\bibinfo{year}{2022}).
\newblock


\bibitem[Hyafil and Rivest(1976)]%
        {npcomplete}
\bibfield{author}{\bibinfo{person}{Laurent Hyafil} {and}
  \bibinfo{person}{Ronald~L. Rivest}.} \bibinfo{year}{1976}\natexlab{}.
\newblock \showarticletitle{Constructing optimal binary decision trees is
  NP-complete}.
\newblock \bibinfo{journal}{\emph{Inform. Process. Lett.}}
  (\bibinfo{year}{1976}).
\newblock


\bibitem[Kairgeldin and Carreira-Perpi\~{n}\'{a}n(2024)]%
        {10.1145/3637528.3671903}
\bibfield{author}{\bibinfo{person}{Rasul Kairgeldin} {and}
  \bibinfo{person}{Miguel~\'{A}. Carreira-Perpi\~{n}\'{a}n}.}
  \bibinfo{year}{2024}\natexlab{}.
\newblock \showarticletitle{Bivariate Decision Trees: Smaller, Interpretable,
  More Accurate}. In \bibinfo{booktitle}{\emph{KDD}}.
\newblock


\bibitem[Ke et~al\mbox{.}(2017)]%
        {ke2017lightgbm}
\bibfield{author}{\bibinfo{person}{Guolin Ke}, \bibinfo{person}{Qi Meng},
  \bibinfo{person}{Thomas Finley}, \bibinfo{person}{Taifeng Wang},
  \bibinfo{person}{Wei Chen}, \bibinfo{person}{Weidong Ma},
  \bibinfo{person}{Qiwei Ye}, {and} \bibinfo{person}{Tie-Yan Liu}.}
  \bibinfo{year}{2017}\natexlab{}.
\newblock \showarticletitle{Lightgbm: A highly efficient gradient boosting
  decision tree}.
\newblock \bibinfo{journal}{\emph{NeurIPS}} (\bibinfo{year}{2017}).
\newblock


\bibitem[Kohler et~al\mbox{.}(2024)]%
        {kohler2024interpretable}
\bibfield{author}{\bibinfo{person}{Hector Kohler}, \bibinfo{person}{Quentin
  Delfosse}, \bibinfo{person}{Riad Akrour}, \bibinfo{person}{Kristian
  Kersting}, {and} \bibinfo{person}{Philippe Preux}.}
  \bibinfo{year}{2024}\natexlab{}.
\newblock \showarticletitle{Interpretable and Editable Programmatic Tree
  Policies for Reinforcement Learning}. In
  \bibinfo{booktitle}{\emph{Seventeenth European Workshop on Reinforcement
  Learning}}.
\newblock


\bibitem[L.~Puterman(1994)]%
        {puterman}
\bibfield{editor}{\bibinfo{person}{Martin L.~Puterman}} (Ed.).
  \bibinfo{year}{1994}\natexlab{}.
\newblock \bibinfo{booktitle}{\emph{Markov Decision Processes: Discrete
  Stochastic Dynamic Programming}}.
\newblock \bibinfo{publisher}{John Wiley \& Sons}.
\newblock


\bibitem[Lin et~al\mbox{.}(2020)]%
        {lin2020generalized}
\bibfield{author}{\bibinfo{person}{Jimmy Lin}, \bibinfo{person}{Chudi Zhong},
  \bibinfo{person}{Diane Hu}, \bibinfo{person}{Cynthia Rudin}, {and}
  \bibinfo{person}{Margo Seltzer}.} \bibinfo{year}{2020}\natexlab{}.
\newblock \showarticletitle{Generalized and scalable optimal sparse decision
  trees}. In \bibinfo{booktitle}{\emph{ICML}}.
\newblock


\bibitem[Lipton(2018)]%
        {lipton}
\bibfield{author}{\bibinfo{person}{Zachary~C. Lipton}.}
  \bibinfo{year}{2018}\natexlab{}.
\newblock \showarticletitle{The Mythos of Model Interpretability: In machine
  learning, the concept of interpretability is both important and slippery.}
\newblock \bibinfo{journal}{\emph{Queue}} (\bibinfo{year}{2018}).
\newblock


\bibitem[Loh(2014)]%
        {loh2014fifty}
\bibfield{author}{\bibinfo{person}{Wei-Yin Loh}.}
  \bibinfo{year}{2014}\natexlab{}.
\newblock \showarticletitle{Fifty years of classification and regression
  trees}.
\newblock \bibinfo{journal}{\emph{International Statistical Review}}
  (\bibinfo{year}{2014}).
\newblock


\bibitem[Marton et~al\mbox{.}(2024)]%
        {marton2024sympolsymbolictreebasedonpolicy}
\bibfield{author}{\bibinfo{person}{Sascha Marton}, \bibinfo{person}{Tim Grams},
  \bibinfo{person}{Florian Vogt}, \bibinfo{person}{Stefan Lüdtke},
  \bibinfo{person}{Christian Bartelt}, {and} \bibinfo{person}{Heiner
  Stuckenschmidt}.} \bibinfo{year}{2024}\natexlab{}.
\newblock \showarticletitle{Mitigating Information Loss in Tree-Based
  Reinforcement Learning via Direct Optimization}. In
  \bibinfo{booktitle}{\emph{ICLR}}.
\newblock


\bibitem[Mazumder et~al\mbox{.}(2022)]%
        {quantbnb}
\bibfield{author}{\bibinfo{person}{Rahul Mazumder}, \bibinfo{person}{Xiang
  Meng}, {and} \bibinfo{person}{Haoyue Wang}.} \bibinfo{year}{2022}\natexlab{}.
\newblock \showarticletitle{Quant-{B}n{B}: A Scalable Branch-and-Bound Method
  for Optimal Decision Trees with Continuous Features}. In
  \bibinfo{booktitle}{\emph{ICML}}.
\newblock


\bibitem[McTavish et~al\mbox{.}(2022)]%
        {McTavish_Zhong_Achermann_Karimalis_Chen_Rudin_Seltzer_2022}
\bibfield{author}{\bibinfo{person}{Hayden McTavish}, \bibinfo{person}{Chudi
  Zhong}, \bibinfo{person}{Reto Achermann}, \bibinfo{person}{Ilias Karimalis},
  \bibinfo{person}{Jacques Chen}, \bibinfo{person}{Cynthia Rudin}, {and}
  \bibinfo{person}{Margo Seltzer}.} \bibinfo{year}{2022}\natexlab{}.
\newblock \showarticletitle{Fast Sparse Decision Tree Optimization via
  Reference Ensembles}.
\newblock \bibinfo{journal}{\emph{AAAI}} (\bibinfo{year}{2022}).
\newblock


\bibitem[Mohamed et~al\mbox{.}(2012)]%
        {pruning2}
\bibfield{author}{\bibinfo{person}{W~Nor Haizan~W Mohamed},
  \bibinfo{person}{Mohd Najib~Mohd Salleh}, {and} \bibinfo{person}{Abdul~Halim
  Omar}.} \bibinfo{year}{2012}\natexlab{}.
\newblock \showarticletitle{A comparative study of reduced error pruning method
  in decision tree algorithms}. In \bibinfo{booktitle}{\emph{2012 IEEE
  International conference on control system, computing and engineering}}.
\newblock


\bibitem[Murthy and Salzberg(1995a)]%
        {how-eff}
\bibfield{author}{\bibinfo{person}{Sreerama Murthy} {and}
  \bibinfo{person}{Steven Salzberg}.} \bibinfo{year}{1995}\natexlab{a}.
\newblock \showarticletitle{Decision tree induction: how effective is the
  greedy heuristic?}. In \bibinfo{booktitle}{\emph{KDD}}.
\newblock


\bibitem[Murthy and Salzberg(1995b)]%
        {Murthy}
\bibfield{author}{\bibinfo{person}{Sreerama Murthy} {and}
  \bibinfo{person}{Steven Salzberg}.} \bibinfo{year}{1995}\natexlab{b}.
\newblock \showarticletitle{Lookahead and Pathology in Decision Tree
  Induction}. In \bibinfo{booktitle}{\emph{IJCAI}}.
\newblock


\bibitem[Murthy et~al\mbox{.}(1994)]%
        {murthy1994system}
\bibfield{author}{\bibinfo{person}{Sreerama~K Murthy}, \bibinfo{person}{Simon
  Kasif}, {and} \bibinfo{person}{Steven Salzberg}.}
  \bibinfo{year}{1994}\natexlab{}.
\newblock \showarticletitle{A system for induction of oblique decision trees}.
\newblock \bibinfo{journal}{\emph{Journal of artificial intelligence research}}
  (\bibinfo{year}{1994}).
\newblock


\bibitem[Norouzi et~al\mbox{.}(2015)]%
        {NIPS2015_1579779b}
\bibfield{author}{\bibinfo{person}{Mohammad Norouzi}, \bibinfo{person}{Maxwell
  Collins}, \bibinfo{person}{Matthew~A Johnson}, \bibinfo{person}{David~J
  Fleet}, {and} \bibinfo{person}{Pushmeet Kohli}.}
  \bibinfo{year}{2015}\natexlab{}.
\newblock \showarticletitle{Efficient Non-greedy Optimization of Decision
  Trees}. In \bibinfo{booktitle}{\emph{NeurIPS}}.
\newblock


\bibitem[Norton(1989)]%
        {norton}
\bibfield{author}{\bibinfo{person}{Steven~W. Norton}.}
  \bibinfo{year}{1989}\natexlab{}.
\newblock \showarticletitle{Generating Better Decision Trees}. In
  \bibinfo{booktitle}{\emph{IJCAI}}.
\newblock


\bibitem[Pedregosa et~al\mbox{.}(2011)]%
        {scikit-learn}
\bibfield{author}{\bibinfo{person}{F. Pedregosa}, \bibinfo{person}{G.
  Varoquaux}, \bibinfo{person}{A. Gramfort}, \bibinfo{person}{V. Michel},
  \bibinfo{person}{B. Thirion}, \bibinfo{person}{O. Grisel},
  \bibinfo{person}{M. Blondel}, \bibinfo{person}{P. Prettenhofer},
  \bibinfo{person}{R. Weiss}, \bibinfo{person}{V. Dubourg}, \bibinfo{person}{J.
  Vanderplas}, \bibinfo{person}{A. Passos}, \bibinfo{person}{D. Cournapeau},
  \bibinfo{person}{M. Brucher}, \bibinfo{person}{M. Perrot}, {and}
  \bibinfo{person}{E. Duchesnay}.} \bibinfo{year}{2011}\natexlab{}.
\newblock \showarticletitle{Scikit-learn: Machine Learning in {P}ython}.
\newblock \bibinfo{journal}{\emph{Journal of Machine Learning Research}}
  (\bibinfo{year}{2011}).
\newblock


\bibitem[Prokhorenkova et~al\mbox{.}(2018)]%
        {10.5555/3327757.3327770}
\bibfield{author}{\bibinfo{person}{Liudmila Prokhorenkova},
  \bibinfo{person}{Gleb Gusev}, \bibinfo{person}{Aleksandr Vorobev},
  \bibinfo{person}{Anna~Veronika Dorogush}, {and} \bibinfo{person}{Andrey
  Gulin}.} \bibinfo{year}{2018}\natexlab{}.
\newblock \showarticletitle{CatBoost: unbiased boosting with categorical
  features}. In \bibinfo{booktitle}{\emph{NeurIPS}}.
\newblock


\bibitem[Quinlan(1986)]%
        {ID3}
\bibfield{author}{\bibinfo{person}{J.~R. Quinlan}.}
  \bibinfo{year}{1986}\natexlab{}.
\newblock \showarticletitle{Induction of Decision Trees}.
\newblock \bibinfo{journal}{\emph{Machine Learning}} (\bibinfo{year}{1986}).
\newblock


\bibitem[Quinlan(1993)]%
        {c45}
\bibfield{author}{\bibinfo{person}{J~Ross Quinlan}.}
  \bibinfo{year}{1993}\natexlab{}.
\newblock \showarticletitle{C4. 5: Programs for machine learning}.
\newblock \bibinfo{journal}{\emph{Elsevier}} (\bibinfo{year}{1993}).
\newblock


\bibitem[Raffin et~al\mbox{.}(2021)]%
        {stable-baselines3}
\bibfield{author}{\bibinfo{person}{Antonin Raffin}, \bibinfo{person}{Ashley
  Hill}, \bibinfo{person}{Adam Gleave}, \bibinfo{person}{Anssi Kanervisto},
  \bibinfo{person}{Maximilian Ernestus}, {and} \bibinfo{person}{Noah Dormann}.}
  \bibinfo{year}{2021}\natexlab{}.
\newblock \showarticletitle{Stable-Baselines3: Reliable Reinforcement Learning
  Implementations}.
\newblock \bibinfo{journal}{\emph{Journal of Machine Learning Research}}
  (\bibinfo{year}{2021}).
\newblock


\bibitem[Saux et~al\mbox{.}(2023)]%
        {saux:hal-04192198}
\bibfield{author}{\bibinfo{person}{Patrick Saux}, \bibinfo{person}{Pierre
  Bauvin}, \bibinfo{person}{Violeta Raverdy}, \bibinfo{person}{Julien Teigny},
  \bibinfo{person}{H{\'e}l{\`e}ne Verkindt}, \bibinfo{person}{Tomy
  Soumphonphakdy}, \bibinfo{person}{Maxence Debert}, \bibinfo{person}{Anne
  Jacobs}, \bibinfo{person}{Daan Jacobs}, \bibinfo{person}{Valerie Monpellier},
  \bibinfo{person}{Phong~Ching Lee}, \bibinfo{person}{Chin~Hong Lim},
  \bibinfo{person}{Johanna~C Andersson-Assarsson}, \bibinfo{person}{Lena
  Carlsson}, \bibinfo{person}{Per-Arne Svensson}, \bibinfo{person}{Florence
  Galtier}, \bibinfo{person}{Guelareh Dezfoulian}, \bibinfo{person}{Mihaela
  Moldovanu}, \bibinfo{person}{Severine Andrieux}, \bibinfo{person}{Julien
  Couster}, \bibinfo{person}{Marie Lepage}, \bibinfo{person}{Erminia Lembo},
  \bibinfo{person}{Ornella Verrastro}, \bibinfo{person}{Maud Robert},
  \bibinfo{person}{Paulina Salminen}, \bibinfo{person}{Geltrude Mingrone},
  \bibinfo{person}{Ralph Peterli}, \bibinfo{person}{Ricardo~V Cohen},
  \bibinfo{person}{Carlos Zerrweck}, \bibinfo{person}{David Nocca},
  \bibinfo{person}{Carel~W Le~Roux}, \bibinfo{person}{Robert Caiazzo},
  \bibinfo{person}{Philippe Preux}, {and} \bibinfo{person}{Fran{\c c}ois
  Pattou}.} \bibinfo{year}{2023}\natexlab{}.
\newblock \showarticletitle{{Development and validation of an interpretable
  machine learning-based calculator for predicting 5-year weight trajectories
  after bariatric surgery: a multinational retrospective cohort SOPHIA study}}.
\newblock \bibinfo{journal}{\emph{{The Lancet Digital Health}}}
  (\bibinfo{year}{2023}).
\newblock


\bibitem[Somepalli et~al\mbox{.}(2021)]%
        {somepalli2021saintimprovedneuralnetworks}
\bibfield{author}{\bibinfo{person}{Gowthami Somepalli}, \bibinfo{person}{Micah
  Goldblum}, \bibinfo{person}{Avi Schwarzschild}, \bibinfo{person}{C.~Bayan
  Bruss}, {and} \bibinfo{person}{Tom Goldstein}.}
  \bibinfo{year}{2021}\natexlab{}.
\newblock \bibinfo{title}{SAINT: Improved Neural Networks for Tabular Data via
  Row Attention and Contrastive Pre-Training}.
\newblock
\showeprint[arxiv]{2106.01342}~[cs.LG]


\bibitem[Topin et~al\mbox{.}(2021)]%
        {topin2021iterative}
\bibfield{author}{\bibinfo{person}{Nicholay Topin}, \bibinfo{person}{Stephanie
  Milani}, \bibinfo{person}{Fei Fang}, {and} \bibinfo{person}{Manuela Veloso}.}
  \bibinfo{year}{2021}\natexlab{}.
\newblock \showarticletitle{Iterative bounding mdps: Learning interpretable
  policies via non-interpretable methods}. In \bibinfo{booktitle}{\emph{AAAI}}.
\newblock


\bibitem[van~der Linden et~al\mbox{.}(2023)]%
        {pystreed}
\bibfield{author}{\bibinfo{person}{Jacobus van~der Linden},
  \bibinfo{person}{Mathijs de Weerdt}, {and} \bibinfo{person}{Emir
  Demirovi\'{c}}.} \bibinfo{year}{2023}\natexlab{}.
\newblock \showarticletitle{Necessary and Sufficient Conditions for Optimal
  Decision Trees using Dynamic Programming}. In
  \bibinfo{booktitle}{\emph{NeurIPS}}.
\newblock


\bibitem[van~der Linden et~al\mbox{.}(2024)]%
        {vanderlinden2024optimalgreedydecisiontrees}
\bibfield{author}{\bibinfo{person}{Jacobus G.~M. van~der Linden},
  \bibinfo{person}{Daniël Vos}, \bibinfo{person}{Mathijs~M. de Weerdt},
  \bibinfo{person}{Sicco Verwer}, {and} \bibinfo{person}{Emir Demirović}.}
  \bibinfo{year}{2024}\natexlab{}.
\newblock \bibinfo{title}{Optimal or Greedy Decision Trees? Revisiting their
  Objectives, Tuning, and Performance}.
\newblock
\showeprint[arxiv]{2409.12788}~[cs.LG]


\bibitem[Vanschoren et~al\mbox{.}(2014)]%
        {10.1145/2641190.2641198}
\bibfield{author}{\bibinfo{person}{Joaquin Vanschoren}, \bibinfo{person}{Jan~N.
  van Rijn}, \bibinfo{person}{Bernd Bischl}, {and} \bibinfo{person}{Luis
  Torgo}.} \bibinfo{year}{2014}\natexlab{}.
\newblock \showarticletitle{OpenML: networked science in machine learning}.
\newblock \bibinfo{journal}{\emph{KDD}}.
\newblock


\bibitem[Verwer and Zhang(2017)]%
        {verwer2017learning}
\bibfield{author}{\bibinfo{person}{Sicco Verwer} {and}
  \bibinfo{person}{Yingqian Zhang}.} \bibinfo{year}{2017}\natexlab{}.
\newblock \showarticletitle{Learning decision trees with flexible constraints
  and objectives using integer optimization}. In
  \bibinfo{booktitle}{\emph{Integration of AI and OR Techniques in Constraint
  Programming: 14th International Conference}}.
\newblock


\bibitem[Verwer and Zhang(2019)]%
        {binoct}
\bibfield{author}{\bibinfo{person}{Sicco Verwer} {and}
  \bibinfo{person}{Yingqian Zhang}.} \bibinfo{year}{2019}\natexlab{}.
\newblock \showarticletitle{Learning optimal classification trees using a
  binary linear program formulation}. In \bibinfo{booktitle}{\emph{AAAI}}.
\newblock


\bibitem[Vos and Verwer(2023)]%
        {mdpdt}
\bibfield{author}{\bibinfo{person}{Dani\"{e}l Vos} {and} \bibinfo{person}{Sicco
  Verwer}.} \bibinfo{year}{2023}\natexlab{}.
\newblock \showarticletitle{Optimal decision tree policies for Markov decision
  processes}. In \bibinfo{booktitle}{\emph{IJCAI}}.
\newblock


\bibitem[Vos and Verwer(2024)]%
        {vos2024optimizinginterpretabledecisiontree}
\bibfield{author}{\bibinfo{person}{Daniël Vos} {and} \bibinfo{person}{Sicco
  Verwer}.} \bibinfo{year}{2024}\natexlab{}.
\newblock \bibinfo{title}{Optimizing Interpretable Decision Tree Policies for
  Reinforcement Learning}.
\newblock
\showeprint[arxiv]{2408.11632}~[cs.LG]


\bibitem[Whiteson(2014)]%
        {higgs_280}
\bibfield{author}{\bibinfo{person}{Daniel Whiteson}.}
  \bibinfo{year}{2014}\natexlab{}.
\newblock \bibinfo{title}{{HIGGS}}.
\newblock \bibinfo{howpublished}{UCI Machine Learning Repository}.
\newblock


\bibitem[Zharmagambetov et~al\mbox{.}(2021a)]%
        {9534446}
\bibfield{author}{\bibinfo{person}{Arman Zharmagambetov},
  \bibinfo{person}{Magzhan Gabidolla}, {and} \bibinfo{person}{Miguel~Ê.
  Carreira-Perpiñán}.} \bibinfo{year}{2021}\natexlab{a}.
\newblock \showarticletitle{Improved Boosted Regression Forests Through
  Non-Greedy Tree Optimization}. In \bibinfo{booktitle}{\emph{IJCNN}}.
\newblock


\bibitem[Zharmagambetov et~al\mbox{.}(2021b)]%
        {9533597}
\bibfield{author}{\bibinfo{person}{Arman Zharmagambetov},
  \bibinfo{person}{Suryabhan~Singh Hada}, \bibinfo{person}{Magzhan Gabidolla},
  {and} \bibinfo{person}{Miguel~Á. Carreira-Perpiñán}.}
  \bibinfo{year}{2021}\natexlab{b}.
\newblock \showarticletitle{Non-Greedy Algorithms for Decision Tree
  Optimization: An Experimental Comparison}. In
  \bibinfo{booktitle}{\emph{IJCNN}}.
\newblock


\end{thebibliography}

\appendix
\section{Proof of Propostion~\ref{prop:equiv}}
\label{sec:proof-equiv}
For the purpose of the proof, we overload the definition of $J_\alpha$ and $\mathcal L_\alpha$, to make explicit the dependency on the dataset and the maximum depth. 
As such, $J_\alpha(\pi)$ becomes $J_\alpha(\pi, {\mathcal E}, D)$ and ${\mathcal L}_\alpha(T)$ becomes ${\mathcal L}_\alpha(T, {\mathcal E})$. 
Let us first show that the relation $J_\alpha(\pi, {\mathcal E}, 0) = -{\mathcal L}_\alpha(T, {\mathcal E})$ is true. 
If the maximum depth is $D = 0$ then $\pi(s_0)$ is necessarily a class assignment, in which case the expected number of splits is zero and the relation is obviously true since the reward is the opposite of the average classification loss. 
Now assume it is true for any dataset and tree of depth at most $D$ with $D \geq 0$ and let us prove that it holds for all trees of depth $D + 1$. 
For a tree $T$ of depth $D + 1$ the root is necessarily a split node. Let $T_l = E(\pi, s_l)$ and $T_r = E(\pi, s_r)$ be the left and right sub-trees of the root node of $T$. 
Since both sub-trees are of depth at most $D$, the relation holds and we have $J_\alpha(\pi, X_l, D) = {\mathcal L}_\alpha(T_l, X_l)$ and $J_\alpha(\pi, X_r, D) = {\mathcal L}_\alpha(T_r, X_r)$, where $X_l$ and $X_r$ are the datasets of the ``right'' and ``left'' states to which the MDP transitions---with probabilities $p_l$ and $p_r$---upon application of $\pi(s_0)$ in $s_0$, as described in the MDP formulation. 
Moreover, from the definition of the policy return we have 
\begin{align*}
   J_\alpha(&\pi, {\mathcal E}, D + 1) = -\alpha + p_l * J_\alpha(\pi, X_l, D) + p_r * J_\alpha(\pi, X_r, D)\\
   &= -\alpha - p_l * {\mathcal L}_\alpha(T_l, X_l) - p_r * {\mathcal L}_\alpha(T_r, D)\\
   &= -\alpha - p_l * \Bigg(\frac{1}{|X_l|}\sum_{(x_i, y_i)\in X_l}\ell(y_i, T_l(x_i))  + \alpha C(T_l)\Bigg)\\
   &\ - p_r * \Bigg(\frac{1}{|X_r|}\sum_{(x_i, y_i)\in X_r}\ell(y_i, T_r(x_i))  + \alpha C(T_r)\Bigg)\\
   &= -\frac{1}{N}\sum_{(x_i, y_i)\in X}\ell(y_i, T(x_i)) - \alpha (1 + p_l C(T_l) + p_r C(T_r))\\
   &= -{\mathcal L}(T, {\mathcal E}) 
\end{align*}
\section{Additional Experiments and Hyperparameters}
In this section we provide additional experimental results. In Table \ref{tab:regression}, we compare DPDT trees to CART and STreeD trees using 50 train/test splits of regression datasets from \citep{grinsztajn2022tree}. All algorithms are run with default hyperparameters. The configuration of DPDT is (4, 4, 4) or (4,4,4,4,4). STreeD is run with a time limit of 4 hours per tree computation and on binarized versions of the datasets. Both for depth-3 and depth-5 trees, DPDT outperforms other baselines in terms of train and test accuracies. Indeed, because STreeD runs on ``approximated'' datasets, it performs pretty poorly. 

In Table \ref{tab:more-res}, we compare DPDT(4, 4, 4, 4, 4) to additional optimal decision tree baselines on datasets with \textbf{binary features}. The optimal decision tree baselines run with default hyperparameters and a time-limit of 10 minutes. The results show that even on binary datasets that optimal algorithms are designed to handle well; DPDT outperforms other baselimes. This is likely because optimal trees are slow and/or don't scale well to depth 5.

In Table~\ref{tab:lookahead} compare DPDT to lookahead depth-3 trees when optimizing Eq.\ref{eq:suplearning}. Unlike the other greedy approaches, lookahead decision trees~\citep{norton} do not pick the split that optimizes a heuristic immediately. Instead, they pick a split that sets up the best possible heuristic value on the following split. Lookahead-1 chooses nodes at depth $d<3$ by looking 1 depth in the future: it looks for the sequence of 2 splits that maximizes the information gain at depth $d + 1$. Lookahead-2 is the optimal depth-3 tree and Lookahead-0 would be just building the tree greedily like CART. The conclusion are roughly the same as for Table~\ref{tab:tree_comparison_combined}. Both lookahead trees and DPDT\footnote{\url{https://github.com/KohlerHECTOR/DPDTreeEstimator}} are in \texttt{Python} which makes them slow but comparable.

We also provide the hyperparameters necessary to reproduce experiments from section \ref{sec:generalization} and \ref{sec:boosting} in Table~\ref{tab:tree_hyperparams}.

\begin{table}
\caption{Train accuracies of depth-3 trees (with number of operations). Lookahead trees are trained with a time limit of 12 hours.}
\centering
\begin{tabular}{l|c|c}
\toprule
\textbf{Dataset} & \textbf{DPDT} & \textbf{Lookahead-1} \\
\midrule
avila & 57.22 (1304) & OoT \\
bank & 97.99 (699) & 96.54 (7514) \\
bean & 85.30 (1297) & OoT \\
bidding & 99.27 (744) & 98.12 (20303) \\
eeg & 69.38 (1316) & 69.09 (10108) \\
fault & 67.40 (1263) & 67.20 (32514) \\
htru & 98.01 (1388) & OoT \\
magic & 82.81 (1451) & OoT \\
occupancy & 99.31 (1123) & 99.01 (15998) \\
page & 97.03 (1243) & 96.44 (16295) \\
raisin & 88.61 (1193) & 86.94 (9843) \\
rice & 93.44 (1367) & 93.24 (37766) \\
room & 99.23 (1196) & 99.04 (5638) \\
segment & 87.88 (871) & 68.83 (24833) \\
skin & 96.60 (1300) & 96.61 (1290) \\
wilt & 99.47 (862) & 99.31 (36789) \\
\bottomrule
\end{tabular}
\label{tab:lookahead}
\end{table}

\begin{table*}
\centering
\footnotesize
\caption{Mean train and test scores (with standard errors) for regression datasets over 50 cross-validation runs.}\label{tab:regression}
\begin{tabular}{l|rr|rr|rr||rr|rr|rr|rr}
\hline
 & \multicolumn{6}{c||}{\textbf{Depth 3}} & \multicolumn{6}{c|}{\textbf{Depth 5}} \\ \hline
Dataset & \multicolumn{2}{c|}{DPDT} & \multicolumn{2}{c|}{Optimal} & \multicolumn{2}{c||}{CART} & \multicolumn{2}{c|}{DPDT} & \multicolumn{2}{c|}{Optimal} & \multicolumn{2}{c|}{CART} \\
 & Train & Test & Train & Test & Train & Test & Train & Test & Train & Test & Train & Test \\
\hline
nyc-taxi & 39.0 ± 0.0 & 38.9 ± 0.2 & 33.8 ± 0.0 & 33.8 ± 0.1 & 39.0 ± 0.0 & 38.9 ± 0.2 & 45.8 ± 0.0 & 45.7 ± 0.2 & 33.8 ± 0.0 & 33.8 ± 0.1 & 42.7 ± 0.0 & 42.6 ± 0.2 \\
medical\_charges & 95.2 ± 0.0 & 95.2 ± 0.0 & 90.1 ± 0.0 & 90.1 ± 0.1 & 95.2 ± 0.0 & 95.2 ± 0.0 & 97.7 ± 0.0 & 97.7 ± 0.0 & 90.1 ± 0.0 & 90.1 ± 0.1 & 97.7 ± 0.0 & 97.7 ± 0.0 \\
diamonds & 93.0 ± 0.0 & 92.9 ± 0.1 & 90.1 ± 0.0 & 90.1 ± 0.1 & 92.7 ± 0.0 & 92.6 ± 0.1 & 94.2 ± 0.0 & 94.0 ± 0.1 & 90.1 ± 0.0 & 90.1 ± 0.1 & 94.1 ± 0.0 & 93.9 ± 0.1 \\
house\_16H & 39.9 ± 0.1 & 38.1 ± 2.5 & 32.8 ± 0.1 & 29.4 ± 1.6 & 35.8 ± 0.1 & 35.8 ± 1.9 & 59.4 ± 0.1 & 35.2 ± 4.1 & 32.8 ± 0.1 & 29.4 ± 1.6 & 51.5 ± 0.1 & 41.3 ± 3.1 \\
house\_sales & 67.0 ± 0.0 & 66.0 ± 0.4 & 65.1 ± 0.0 & 64.4 ± 0.4 & 66.8 ± 0.0 & 66.1 ± 0.4 & 77.6 ± 0.0 & 76.1 ± 0.3 & 65.1 ± 0.0 & 64.4 ± 0.4 & 76.8 ± 0.0 & 75.3 ± 0.4 \\
superconduct & 73.1 ± 0.0 & 72.7 ± 0.5 & 70.9 ± 0.0 & 70.5 ± 0.5 & 70.4 ± 0.0 & 69.7 ± 0.5 & 83.0 ± 0.0 & 81.7 ± 0.4 & 70.9 ± 0.0 & 70.5 ± 0.5 & 78.2 ± 0.0 & 76.5 ± 0.5 \\
houses & 51.7 ± 0.0 & 50.7 ± 0.7 & 48.5 ± 0.1 & 47.3 ± 0.7 & 49.5 ± 0.0 & 48.4 ± 0.7 & 69.1 ± 0.0 & 67.6 ± 0.5 & 48.5 ± 0.1 & 47.3 ± 0.7 & 60.4 ± 0.1 & 58.5 ± 0.6 \\
Bike\_Sharing & 55.2 ± 0.0 & 54.7 ± 0.5 & 45.1 ± 0.1 & 44.8 ± 0.7 & 48.1 ± 0.0 & 47.9 ± 0.5 & 65.2 ± 0.0 & 63.3 ± 0.5 & 45.1 ± 0.1 & 44.8 ± 0.7 & 59.1 ± 0.0 & 58.6 ± 0.5 \\
elevators & 48.0 ± 0.0 & 46.8 ± 1.1 & 40.2 ± 0.1 & 38.2 ± 1.0 & 46.8 ± 0.0 & 45.5 ± 1.2 & 65.6 ± 0.0 & 61.2 ± 1.0 & 40.2 ± 0.1 & 38.2 ± 1.0 & 61.9 ± 0.0 & 58.0 ± 1.2 \\
pol & 72.2 ± 0.0 & 71.3 ± 0.6 & 67.8 ± 0.1 & 67.5 ± 0.9 & 72.0 ± 0.0 & 71.2 ± 0.8 & 93.3 ± 0.0 & 92.4 ± 0.3 & 67.8 ± 0.1 & 67.5 ± 0.9 & 92.1 ± 0.0 & 91.7 ± 0.3 \\
MiamiHousing2016 & 62.3 ± 0.0 & 60.4 ± 0.8 & 60.8 ± 0.0 & 58.3 ± 0.8 & 62.3 ± 0.0 & 60.6 ± 0.8 & 79.8 ± 0.0 & 77.5 ± 0.5 & 60.8 ± 0.0 & 58.3 ± 0.8 & 77.3 ± 0.1 & 74.7 ± 0.6 \\
Ailerons & 63.5 ± 0.0 & 62.6 ± 0.7 & 61.6 ± 0.0 & 60.3 ± 0.7 & 63.5 ± 0.0 & 62.6 ± 0.7 & 76.0 ± 0.0 & 72.9 ± 0.6 & 61.6 ± 0.0 & 60.3 ± 0.7 & 75.5 ± 0.0 & 73.2 ± 0.5 \\
Brazilian\_houses & 90.7 ± 0.0 & 90.3 ± 0.8 & 89.2 ± 0.0 & 89.4 ± 0.8 & 90.7 ± 0.0 & 90.4 ± 0.8 & 97.6 ± 0.0 & 96.6 ± 0.4 & 89.2 ± 0.0 & 89.4 ± 0.8 & 97.3 ± 0.0 & 96.4 ± 0.4 \\
sulfur & 72.5 ± 0.1 & 66.6 ± 2.2 & 35.7 ± 0.1 & 19.1 ± 6.7 & 72.0 ± 0.1 & 68.0 ± 2.2 & 89.0 ± 0.0 & 68.4 ± 6.7 & 35.7 ± 0.1 & 19.1 ± 6.7 & 81.8 ± 0.1 & 74.0 ± 2.2 \\
yprop\_41 & 6.3 ± 0.0 & 2.3 ± 0.7 & 3.6 ± 0.0 & 1.5 ± 0.4 & 6.2 ± 0.0 & 2.1 ± 0.8 & 13.2 ± 0.0 & 1.2 ± 1.7 & 3.6 ± 0.0 & 1.5 ± 0.4 & 10.8 ± 0.0 & -1.7 ± 1.4 \\
cpu\_act & 93.4 ± 0.0 & 92.0 ± 0.6 & 89.0 ± 0.0 & 86.5 ± 1.9 & 93.4 ± 0.0 & 92.0 ± 0.6 & 96.5 ± 0.0 & 94.7 ± 0.5 & 89.0 ± 0.0 & 86.5 ± 1.9 & 96.3 ± 0.0 & 95.1 ± 0.4 \\
wine\_quality & 27.9 ± 0.0 & 23.3 ± 0.9 & 25.2 ± 0.0 & 23.7 ± 0.8 & 27.7 ± 0.0 & 24.5 ± 0.8 & 37.4 ± 0.0 & 26.7 ± 1.0 & 25.2 ± 0.0 & 23.7 ± 0.8 & 35.9 ± 0.1 & 26.7 ± 0.9 \\
abalone & 46.3 ± 0.0 & 39.6 ± 1.6 & 42.5 ± 0.0 & 40.4 ± 1.4 & 43.3 ± 0.0 & 39.2 ± 1.2 & 58.6 ± 0.0 & 44.7 ± 1.8 & 42.5 ± 0.0 & 40.4 ± 1.4 & 54.5 ± 0.0 & 46.3 ± 1.4 \\
\hline
\end{tabular}
\end{table*}
\begin{table*}
    \centering
    \footnotesize
    \caption{Train/test accuracies of different decision tree induction algorithms. All algorithms induce trees of depth at most 5 on 8 classification datasets. A time limit of 10 minutes is set for OCT-type algorithms. The values in this table are averaged over 3 seeds giving 3 different train/test datasets. 
    }
    \begin{tabular}{|c|ccc|ccccc|ccccc|}
    \hline \multicolumn{4}{|c}{ Datasets } & \multicolumn{5}{|c|}{ Train Accuracy depth-5}& \multicolumn{5}{c|}{ Test Accuracy depth-5} \\
    Names & Samples & Features & Classes & DPDT & OCT & MFOCT & BinOCT & CART & DPDT & OCT & MFOCT & BinOCT & CART \\
    \hline 
    balance-scale & $624$ & 4 & 3 & $ 90.9 \%$ & $ 71.8 \%$ & $ 82.6 \%$ & $ 67.5 \%$ & $ 86.5 \%$ & $ 77.1 \%$ & $ 66.9 \%$ & $ 71.3 \%$ & $ 61.6 \%$ & $ 76.4 \%$  \\
    breast-cancer & $276$ & 9 & 2 & $ 94.2 \%$ & $ 88.6 \%$ & $ 91.1 \%$ & $ 75.4 \%$ & $ 87.9 \%$ & $ 66.4 \%$ & $ 67.1 \%$ & $ 73.8 \%$ & $ 62.4 \%$ & $ 70.3 \%$ \\
    car-evaluation & $1728$ & 6 & 4 & $ 92.2 \%$  & $ 70.1 \%$ & $ 80.4 \%$ & $ 84.0 \%$ & $ 87.1 \%$ & $ 90.3 \%$  &$ 69.5 \%$ & $ 79.8 \%$ & $ 82.3 \%$ & $ 87.1 \%$ \\
    hayes-roth & $160$ & 9 & 3 & $ 93.3 \%$  & $ 82.9 \%$ & $ 95.4 \%$ & $ 64.6 \%$ & $ 76.7 \%$ & $ 75.4 \%$ &$ 77.5 \%$ & $ 77.5 \%$ & $ 54.2 \%$ & $ 69.2 \%$ \\
    house-votes-84 & $232$ & 16 & 2 & $ 100.0 \%$ & $ 100.0 \%$ & $ 100.0 \%$ & $ 100.0 \%$ & $ 99.4 \%$ & $ 95.4 \%$ &$ 93.7 \%$ & $ 94.3 \%$ & $ 96.0 \%$ & $ 95.1 \%$\\
    soybean-small & $46$ & 50 & 4 & $ 100.0 \%$  & $ 100.0 \%$ & $ 100.0 \%$ & $ 76.8 \%$ & $ 100.0 \%$ & $ 93.1 \%$ &$ 94.4 \%$ & $ 91.7 \%$ & $ 72.2 \%$ & $ 93.1 \%$ \\
    spect & $266$ & 22 & 2 & $ 93.0 \%$ & $ 92.5 \%$ & $ 93.0 \%$ & $ 92.2 \%$ & $ 88.5 \%$ & $ 73.1 \%$ &$ 75.6 \%$ & $ 74.6 \%$ & $ 73.1 \%$ & $ 75.1 \%$\\
    tic-tac-toe & $958$ & 24 & 2 & $ 90.8 \%$ & $ 68.5 \%$ & $ 76.1 \%$ & $ 85.7 \%$ & $ 85.8 \%$ & $ 82.1 \%$  &$ 69.6 \%$ & $ 73.6 \%$ & $ 79.6 \%$ & $ 81.0 \%$\\
    \hline
    \end{tabular}
    \label{tab:more-res}
\end{table*}

\begin{table*}
\centering
\footnotesize
\caption{Hyperparameter search spaces for tree-based models. More details about the hyperparamters meaning are given in \cite{komer-proc-scipy-2014}.}
\begin{tabular}{p{2.8cm}p{2.3cm}p{2.3cm}p{2.3cm}p{2.3cm}p{2.3cm}}
\toprule
\textbf{Parameter} & \textbf{CART} & \textbf{Boosted-CART} & \textbf{DPDT} & \textbf{Boosted-DPDT} & \textbf{STreeD} \\
\midrule

\multicolumn{6}{l}{\textit{Common Tree Parameters}} \\
\cmidrule(l){1-6}
max\_depth & \{5: 0.7,\newline 2,3,4: 0.1\} & \{2: 0.4,\newline 3: 0.6\} & \{5: 0.7,\newline 2,3,4: 0.1\} & \{2: 0.4,\newline 3: 0.6\} & 5 \\

min\_samples\_split & \{2: 0.95,\newline 3: 0.05\} & \{2: 0.95,\newline 3: 0.05\} & \{2: 0.95,\newline 3: 0.05\} & \{2: 0.95,\newline 3: 0.05\} & -- \\

min\_impurity\_decrease & \{0.0: 0.85,\newline 0.01,0.02,0.05: 0.05\} & \{0.0: 0.85,\newline 0.01,0.02,0.05: 0.05\} & \{0.0: 0.85,\newline 0.01,0.02,0.05: 0.05\} & \{0.0: 0.85,\newline 0.01,0.02,0.05: 0.05\} & -- \\

min\_samples\_leaf & $\mathcal{Q}(\log\text{-}\mathcal{U}[2,51])$ & $\mathcal{Q}(\log\text{-}\mathcal{U}[2,51])$ & $\mathcal{Q}(\log\text{-}\mathcal{U}[2,51])$ & $\mathcal{Q}(\log\text{-}\mathcal{U}[2,51])$ & $\mathcal{Q}(\log\text{-}\mathcal{U}[2,51])$ \\

min\_weight\_fraction\_leaf   &   \{  0.0: 0.95,\newline 0.01: 0.05\} & \{0.0: 0.95,\newline 0.01: 0.05\} & \{0.0: 0.95,\newline 0.01: 0.05\} & \{0.0: 0.95,\newline 0.01: 0.05\} & -- \\

max\_features & \{"sqrt": 0.5,\newline "log2": 0.25,\newline 10000: 0.25\} & \{"sqrt": 0.5,\newline "log2": 0.25,\newline 10000: 0.25\} & \{"sqrt": 0.5,\newline "log2": 0.25,\newline 10000: 0.25\} & \{"sqrt": 0.5,\newline "log2": 0.25,\newline 10000: 0.25\} & -- \\

\midrule
\multicolumn{6}{l}{\textit{Model-Specific Parameters}} \\
\cmidrule(l){1-6}
max\_leaf\_nodes & \{32: 0.85,\newline 5,10,15: 0.05\} & \{8: 0.85,\newline 5: 0.05,\newline 7: 0.1\} & -- & -- & -- \\

cart\_nodes\_list & -- & -- & 8 configs\newline (uniform) & 5 configs\newline (uniform) & -- \\

learning\_rate & -- & $\log\mathcal{N}(\ln(0.01),\ln(10))$ & -- & $\log\mathcal{N}(\ln(0.01),\ln(10))$ & -- \\

n\_estimators & -- & 1000 & -- & 1000 & -- \\

max\_num\_nodes & -- & -- & -- & -- & \{3,5,7,11,\newline 17,25,31\}\newline (uniform) \\

n\_thresholds & -- & -- & -- & -- & \{5,10,20,50\}\newline (uniform) \\

cost\_complexity & -- & -- & -- & -- & 0 \\

time\_limit & -- & -- & -- & -- & 1800 \\
\bottomrule
\end{tabular}
\label{tab:tree_hyperparams}
\end{table*}
\end{document}